\documentclass{article}

\usepackage[preprint]{neurips_2026}          

\usepackage[utf8]{inputenc}
\usepackage[T1]{fontenc}
\usepackage{hyperref}
\hypersetup{
  pdftitle={TRACE: Distilling Where It Matters via Token-Routed Self On-Policy Alignment},
  pdfauthor={Jiaxuan Wang, Xuan Ouyang, Zhiyu Chen, Yulan Hu, Zheng Pan, Xin Li, Lan-Zhe Guo},
  pdfsubject={Preprint},
  pdfkeywords={on-policy distillation, reasoning, GRPO, span localization, RLVR},
}

\usepackage{url}
\usepackage{booktabs}
\usepackage{array}
\usepackage{amsfonts}
\usepackage{amsmath}
\usepackage{amssymb}
\usepackage{amsthm}
\usepackage{nicefrac}
\usepackage{microtype}
\usepackage{xcolor}
\usepackage{colortbl}
\usepackage{graphicx}
\usepackage{multirow}
\usepackage{algorithm}
\usepackage{algpseudocode}
\usepackage{enumitem}
\usepackage{subcaption}
\usepackage{wrapfig}
\usepackage{placeins}      
\usepackage{float}         
\usepackage[framemethod=default]{mdframed}
\usepackage{xspace}

\definecolor{promptInstrTitle}{HTML}{B8853E}   
\definecolor{promptInstrBg}{HTML}{FBF1DE}
\definecolor{promptKeyTitle}{HTML}{6B9A4F}     
\definecolor{promptKeyBg}{HTML}{E8F2DC}
\definecolor{promptRubricTitle}{HTML}{BB5A5A}  
\definecolor{promptRubricBg}{HTML}{F8E0E0}
\definecolor{promptInputTitle}{HTML}{4E89C5}   
\definecolor{promptInputBg}{HTML}{DEEBF6}
\definecolor{schemaLabel}{HTML}{555555}        
\definecolor{traceRow}{HTML}{F3F4F6}           

\newenvironment{promptblock}[3]{%
  \begin{mdframed}[
    frametitle={\strut #1\strut},
    frametitlebackgroundcolor=#2,
    frametitlefont={\centering\bfseries\sffamily\small\color{white}},
    frametitlerule=false,
    frametitlealignment=\centering,
    frametitleaboveskip=6pt,
    frametitlebelowskip=6pt,
    backgroundcolor=#3,
    hidealllines=true,
    roundcorner=4pt,
    innerleftmargin=11pt,
    innerrightmargin=11pt,
    innertopmargin=8pt,
    innerbottommargin=8pt,
    skipabove=10pt,
    skipbelow=6pt,
  ]%
  \small\sffamily\setlength{\parskip}{4pt}\raggedright\sloppy
}{\end{mdframed}}

\newcommand{\sk}[1]{\textcolor{schemaLabel}{\texttt{#1}}}

\newtheorem{theorem}{Theorem}
\newtheorem{lemma}[theorem]{Lemma}
\newtheorem{proposition}[theorem]{Proposition}
\newtheorem{corollary}[theorem]{Corollary}

\newtheorem{remark}[theorem]{Remark}

\newcommand{\method}{\textsc{TRACE}\xspace}
\newcommand{\methodname}{\textbf{T}oken-\textbf{R}outed \textbf{A}lignment for \textbf{C}ritical r\textbf{E}asoning\xspace}

\newcommand{\stopgrad}{\ensuremath{\mathrm{sg}}}

\title{TRACE: Distilling Where It Matters \\
via Token-Routed Self On-Policy Alignment}

\author{%
  Jiaxuan Wang$^{1,2,3}$ \quad
  Xuan Ouyang$^{4}$ \quad
  Zhiyu Chen$^{5}$ \quad
  Yulan Hu$^{3}$\thanks{Corresponding authors.} \\
  \bfseries
  Zheng Pan$^{3}$ \quad
  Xin Li$^{3}$ \quad
  Lan-Zhe Guo$^{1,2}$\footnotemark[\value{footnote}] \\[2pt]
  \normalfont
  $^{1}$State Key Laboratory of Novel Software Technology, Nanjing University \\
  $^{2}$School of Intelligence Science and Technology, Nanjing University \\
  $^{3}$AMAP, Alibaba Group \quad
  $^{4}$University of Wisconsin--Madison \quad
  $^{5}$Tsinghua University \\[2pt]
  jiaxuanwang@smail.nju.edu.cn, guolz@nju.edu.cn
}

\begin{document}

\maketitle

\begin{abstract}
On-policy self-distillation (self-OPD) addresses the sparse-reward bottleneck of reinforcement learning with verifiable rewards (RLVR) by densifying the training signal: a policy teaches itself under privileged context, producing token-level guidance at every position. However, we find that this guidance becomes a liability when its support spans the full response: all-token KL spends gradients on mostly redundant positions and amplifies privileged-information leakage, producing entropy rise, shortened reasoning, and out-of-distribution degradation in long-horizon math training. This points to a granularity mismatch: the right unit of distillation is not the whole response, but the small set of decisive reasoning tokens where the student needs correction. We propose \methodname (\method), a token-routed self-OPD method that uses a privileged annotator to mark critical spans in each rollout while giving the teacher only a coarse diagnostic type, not the span text. \method applies forward KL (FKL) to key spans of correct rollouts, optionally applies reverse KL (RKL) to localized error spans, leaves all other tokens to GRPO, and anneals the KL channel away after a short warm-up. Our analysis explains this routing through two complementary effects: FKL supplies non-vanishing lift to teacher-supported key tokens that the student under-allocates, while span masking and KL decay keep cumulative privileged-gradient exposure finite over the training horizon. On four held-out math benchmarks plus GPQA-Diamond, \method improves over GRPO by $2.76$ percentage points on average and is the only trained method that preserves the Qwen3-8B base OOD score on GPQA-Diamond, where GRPO and all-token self-OPD baselines degrade. Gains persist under online self-annotation, where the actively-training student policy itself is reused as the annotator with no external supervisor ($+1.90$ points, ${\sim}69\%$ of the strong-API gain), reducing the concern that \method merely imports external annotator capability. Across scales, the best action is base-dependent: on a weaker Qwen3-1.7B base it inverts from FKL on key spans to RKL on error spans, and \method is the only trained method to exceed the base on the 5-benchmark average.
\end{abstract}

\section{Introduction}
\label{sec:intro}

Reinforcement learning with verifiable rewards (RLVR), instantiated by methods such as GRPO~\citep{shao2024deepseekmath}, has become a central paradigm for training large reasoning models~\citep{guo2025deepseek,olmo2025}. Yet RLVR assigns only a scalar reward to an entire trajectory, leaving step- and token-level credit assignment to the optimizer. On-policy distillation (OPD)~\citep{hinton2015distilling,gu2024minillm,agarwal2024gkd,lu2025tmlopd} complements this sparse signal by querying teacher logits along the student's own rollouts, often matching or improving RLVR with fewer sampled generations, but at the cost of a separate vocabulary-compatible teacher. On-policy self-distillation (self-OPD)~\citep{huebotter2026sdpo,zhao2026opsd} removes this separate-teacher requirement by letting a policy snapshot teach the student under privileged context, such as verified traces, sibling solutions, or environment feedback; SDPO, for example, matches GRPO with $4$--$6\times$ fewer generations~\citep{huebotter2026sdpo}.

\begin{figure}[!t]
  \centering
  \includegraphics[width=\linewidth]{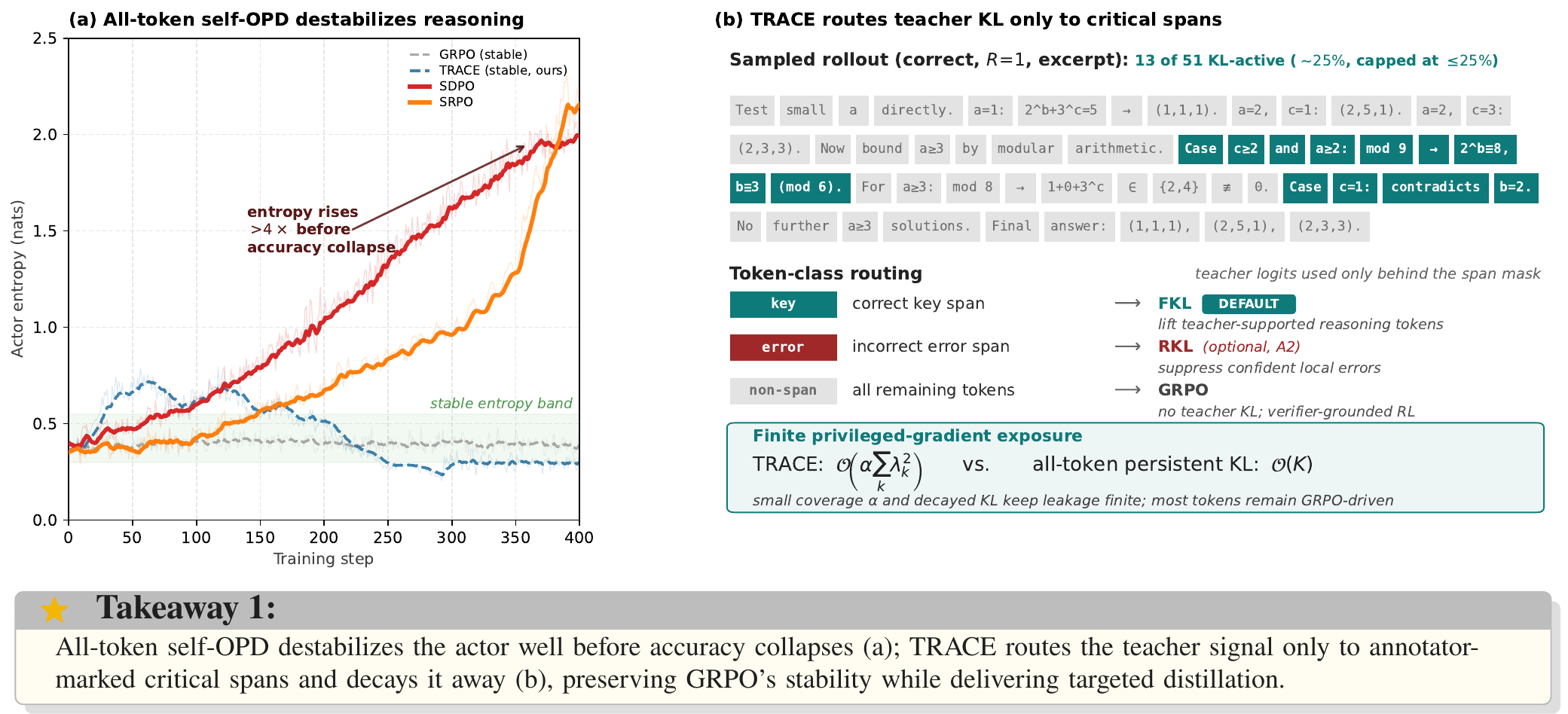}
  \caption{\textbf{Why \method.}
  \textbf{(a)}~Per-token actor entropy across 400 training steps (EMA $\alpha\!=\!0.85$). SDPO and SRPO exceed $4\times$ the GRPO baseline before validation accuracy collapses (App.~\ref{app:three-symptom}); \method tracks GRPO inside the stable band.
  \textbf{(b)}~\method routing: span mask gates KL with coverage cap $\alpha\!=\!0.25$; FKL on $\mathcal{K}_y$ (default), RKL on $\mathcal{E}_y$ (optional), GRPO on $\mathcal{N}_y$. $\lambda_k\!\to\!0$ by step 40; cumulative privileged-gradient exposure $\mathcal{O}(\alpha\sum_k\lambda_k^2)$ (Prop.~\ref{prop:exposure}).}
  \label{fig:teaser}
\end{figure}

The same all-token guidance that makes self-OPD sample-efficient becomes brittle over longer horizons. Across more than a dozen reproduced SDPO/SRPO configurations on Qwen3-8B math RL (App.~\ref{app:sdpo-sweep}), we observe a consistent three-symptom collapse: responses shorten, per-token actor entropy rises above $4\times$ the GRPO baseline (Fig.~\ref{fig:teaser}a, App.~\ref{app:three-symptom}), and validation accuracy drops 18--19 percentage points (pp) from an early peak. In SRPO, the EMA teacher tracks and amplifies the actor's entropy, suggesting a feedback loop rather than a failure of the underlying GRPO optimizer (App.~\ref{app:three-symptom}). GRPO with clip-higher~\citep{yu2025dapo}, under the same data and optimizer stack, remains stable. The failure is therefore specific to the all-token distillation pull, pointing toward the \emph{unit of supervision} rather than the verifier or optimizer: the response is the wrong granularity for privileged KL.

\emph{The failure is one of granularity, not of dense supervision per se.} On-policy distillation analyses show ${\sim}97\%$ of probability mass concentrates on shared student-teacher tokens~\citep{xx2026rethinkingopd}, so most KL gradient lands on tokens the student already produces correctly. Token-importance studies further show that retaining only a fraction of tokens preserves OPD performance~\citep{lin2025critical,xx2026tip}. The few tokens distillation \emph{does} affect tend to include high-divergence \emph{epistemic} markers~\citep{kirk2024rlhf,kim2026why} --- exactly the uncertainty channel supporting robust reasoning. \citet{yang2026rlsd} formalize a complementary perspective: the gradient deviation induced by privileged information has zero mean but variance proportional to a conditional mutual information, accumulating through SGD into spurious $x \to c$ correlations. Operationally, the privileged channel can behave like a training-only hint: if the hint is distilled across the whole response, the student may absorb correlations that are absent at inference. Together, these facts suggest a \emph{uniform distillation tax}: all-token KL spends teacher gradients exactly where they are mostly redundant and most likely to leak. The right unit of distillation is the small set of reasoning spans where the student's next-token distribution needs correction.

\method routes teacher signal only to those spans. A privileged annotator marks localized key/error spans in each rollout and emits a coarse type label; the teacher sees the type label but not the span text, while the span mask gates the loss. Unlike SDPO~\citep{huebotter2026sdpo} and SRPO~\citep{xx2026srpo}, which apply KL across every rollout position, \method caps KL support at $\alpha=0.25$ and decays the channel to zero within 40 steps~\citep{chu2025sft}. \method applies forward KL (FKL) to key spans of correct rollouts, optional reverse KL (RKL) to localized error spans, and no KL to non-spans, which continue under GRPO. FKL lifts teacher-supported tokens that the student under-allocates; RKL suppresses student-confident tokens the teacher disfavors. These are not interchangeable: they call for different token classes. The annotator need not be an external supervisor: online self-annotation by the actively-training student recovers ${\sim}69\%$ of the strong-API gain (\S\ref{sec:exp:ablations}, annotator-quality ablation).

Two complementary mechanisms motivate this design. \textbf{(i) Lift}: in the strong-base regime where our lift diagnostic indicates local under-allocation is a dominant remaining error mode, FKL on $\mathcal{K}_y$ delivers logit pressure that does not vanish when the student assigns little mass --- RKL is student-mass-scaled and can vanish with the very tokens we most want to lift (Cor.~\ref{cor:fkl-underalloc}). \textbf{(ii) Limit leakage}: span masking and short-lived KL decay together bound the cumulative privileged-gradient exposure (Prop.~\ref{prop:exposure}), avoiding the long-horizon tax of persistent all-token KL. The default FKL-on-$\mathcal{K}_y$ corner therefore separates benefit from risk. Our contributions are as follows:
\begin{itemize}[leftmargin=1.1em,itemsep=1pt,topsep=2pt]
\item \textbf{Diagnosis.} We identify a \emph{uniform distillation tax}: all-token self-OPD suffers from a granularity mismatch that connects rising entropy, epistemic suppression~\citep{kim2026why}, and privileged-gradient leakage~\citep{yang2026rlsd}.
\item \textbf{Method and theory.} We introduce \method, which routes $\{\mathrm{FKL}, \mathrm{RKL}, \emptyset\}$~\citep{wen2023fdivergence,ko2024distillm} by critical span class and decays the privileged channel; our analysis explains the default FKL-on-key-spans corner via non-vanishing key-token lift and finite cumulative exposure.
\item \textbf{Empirical evidence.} \method improves the Qwen3-8B 5-benchmark average by $2.76$\,pp, preserves the GPQA-Diamond OOD score, retains a $1.90$\,pp gain under online self-annotation, and shows the predicted FKL$\to$RKL corner shift on the weaker Qwen3-1.7B base (matching Cor.~\ref{cor:fkl-underalloc}).
\end{itemize}

\section{Related Work}
\label{sec:related}

Knowledge distillation and on-policy variants~\citep{hinton2015distilling,gu2024minillm,agarwal2024gkd} densify sequence-level training with teacher logits, either from external teachers or privileged-context self-teachers. Self-OPD methods~\citep{huebotter2026sdpo,zhao2026opsd} remove the separate-teacher requirement, but long-horizon collapse has been attributed to epistemic suppression~\citep{kim2026why} and privileged-information leakage~\citep{yang2026rlsd}, with sample-level routing~\citep{xx2026srpo} and advantage reweighting~\citep{yang2026rlsd} as concurrent mitigations. A separate line on token importance and credit assignment~\citep{xx2026rethinkingopd,lin2025critical,xx2026tip,kazemnejad2025vineppo} shows that useful supervision is concentrated on a small subset of decisive tokens, motivating sparse guidance.
\method shares the long-horizon-collapse diagnosis but operates at \emph{token level} via explicit span localization, treats divergence direction as a per-token-class action (\S\ref{sec:method}), and identifies a teacher-coverage-gap regime where advantage reweighting can damp correct non-canonical gradients (Prop.~\ref{prop:rlsd}). Standard RLVR baselines~\citep{shao2024deepseekmath,yu2025dapo,olmo2025} are the post-training framework we build on; the extended citation map and a position-by-axis paradigm comparison (Tab.~\ref{tab:paradigms}) are in App.~\ref{app:paradigms}.

\section{TRACE: Corner-Routed Span Distillation}
\label{sec:method}

\paragraph{Overview.}
\method keeps three roles separate: the \textbf{student} $\pi_\theta(\cdot|x)$, the \textbf{privileged-context teacher} $\pi_T(\cdot|x,c) := \pi_\theta(\cdot|x,c)$ (same parameters $\theta$, synced from the student, different prompt; App.~\ref{app:prompts}), and the \textbf{span annotator} $\pi_A$, which produces sparse Boolean masks plus coarse type labels but emits no logits and no gradients (full notation conventions in App.~\ref{app:notation}). For each token-class \method selects a corner action from $\{\mathrm{FKL},\,\mathrm{RKL},\,\emptyset\}$, and the KL term is annealed to zero after a short warm-up so long-horizon optimization is GRPO-driven. Empirically, the dominant corner in our strong-base math regime (Tab.~\ref{tab:main}, Fig.~\ref{fig:val_reward}) is $(\mathcal{E}_y, \mathcal{K}_y, \mathcal{N}_y) = (\emptyset, \mathrm{FKL}, \emptyset)$, so the \emph{default} configuration is \textbf{FKL on key spans of correct rollouts, no-KL elsewhere}; on the weaker Qwen3-1.7B base the dominant corner inverts to $(\mathrm{RKL}, \emptyset, \emptyset)$ (Tab.~\ref{tab:main}, lower block), motivating the routed-action framing rather than a single fixed recipe. Reverse KL on error spans is therefore retained as a routed action throughout: an optional branch under the strong-base default and the dominant corner under weaker bases, matching the FKL$\to$RKL shift predicted by Cor.~\ref{cor:fkl-underalloc} in \S\ref{sec:theory}.

\textbf{Why discrete corners and not interior mixing?}
GKD-style methods treat the FKL/RKL coefficient $\beta \in [0,1]$ as a globally tuned hyperparameter. Under \method's per-token-class routing, Prop.~\ref{thm:corner} (App.~\ref{app:proof-corner}) shows that interior $\beta$ is dominated by endpoints under endpoint-alignment and density-floor assumptions: mixing FKL and RKL on the same token class averages two incompatible local behaviors. We therefore extend the choice space with $\emptyset$ (no-KL) for non-spans and treat divergence direction as a discrete per-token-class action throughout. Fig.~\ref{fig:pipeline} illustrates the resulting four-stage pipeline; Eq.~\eqref{eq:loss} (\S\ref{ssec:loss}) gives the per-step routed loss and Eq.~\eqref{eq:lambda-schedule} (\S\ref{ssec:decay}) the KL decay schedule.

\begin{figure}[t]
  \centering
  \includegraphics[width=\linewidth]{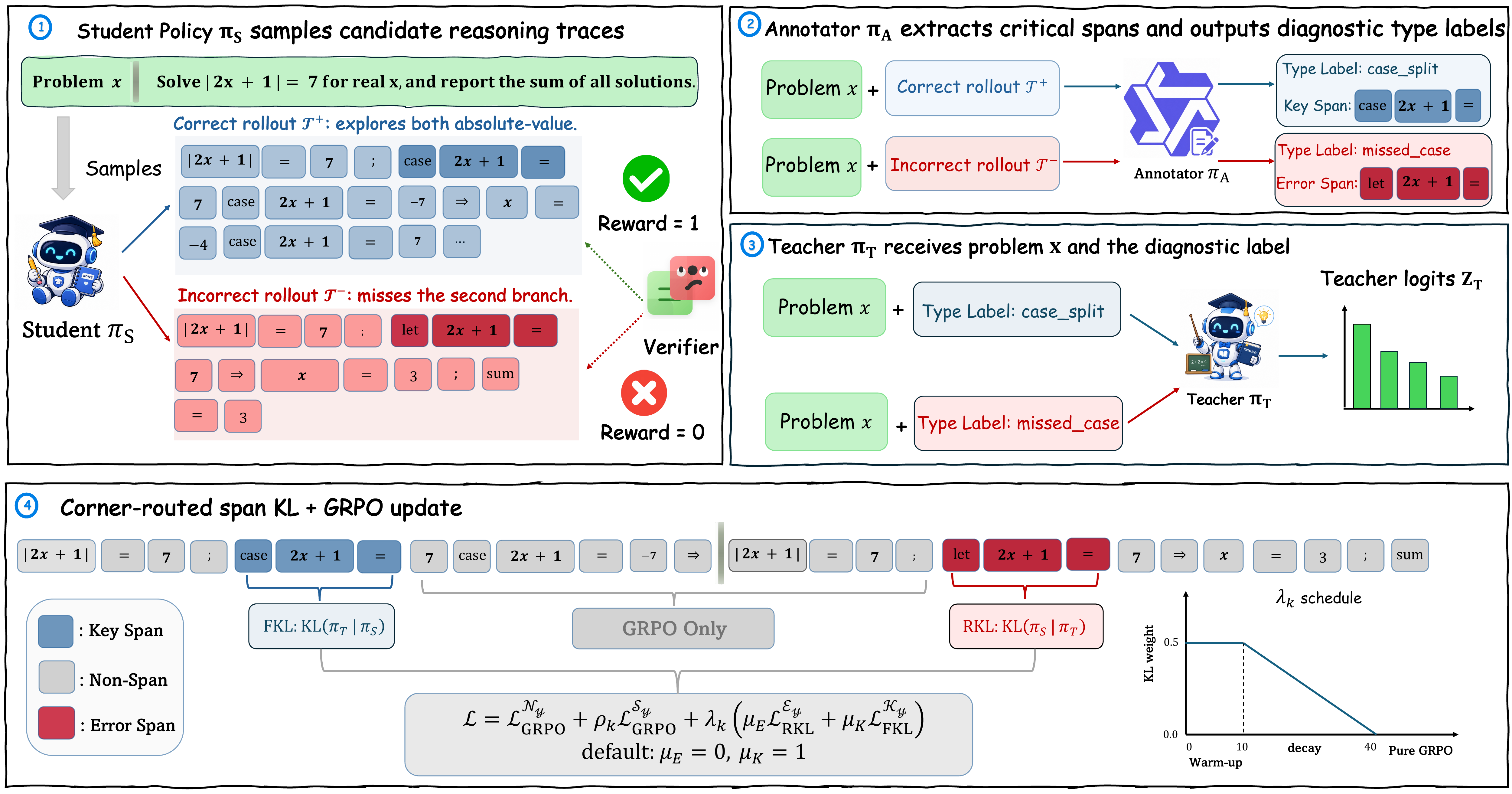}
  \caption{\method pipeline: (1) student samples rollout $\hat y$ and verifier returns $R$; (2) annotator $\pi_A$ produces both a span mask and a coarse type label; (3) teacher receives the type label as a private diagnostic prefix and computes logits causally on $\hat y_{<t}$; (4) routed KL action on each span class --- the default is FKL on $\mathcal{K}_y$ with no-KL on $\mathcal{E}_y$ and $\mathcal{N}_y$ --- combined with GRPO on $\mathcal{N}_y$ and the KL weight decaying to zero after warm-up.}
  \label{fig:pipeline}
\end{figure}

\paragraph{Span annotation via a privileged channel.}
\label{ssec:span-annot}
For each rollout $\hat y$ with verifier outcome $R(x, \hat y) \in \{0, 1\}$, the response is split into numbered segments and the annotator $\pi_A$ (implementation in App.~\ref{app:prompts}) returns segment indices with a coarse type label --- \emph{error spans} on $R{=}0$ rollouts, \emph{key spans} on $R{=}1$ rollouts. Annotated segments are projected to a binary token mask $m \in \{0,1\}^{|\hat y|}$ and capped at $|\mathcal{S}_y| \le \alpha |\hat y|$ with $\alpha = 0.25$. Full JSON schema, segment-to-token alignment, and the four prompt templates (annotator $\times$ teacher for correct/wrong rollouts) are in App.~\ref{app:prompts}.

\paragraph{Rollout-specific diagnostic prefix without content leakage.}
A defining design choice: the privileged context $c$ given to the teacher is \emph{rollout-specific in type but not in content}. The teacher prompt is the original problem $x$ plus a private diagnostic prefix containing only the annotator's coarse \emph{type description} (e.g., \texttt{missing\_case\_split}, \texttt{case\_split\_on\_modular\_assumption}) with explicit instructions not to reference the prefix. The teacher does \emph{not} receive the completed rollout, marked span locations, marked span text, or any full-response diagnostic content as privileged input. For teacher-forced KL, it is evaluated on the same causal prefix $\hat y_{<t}$ as the student, plus the coarse type label in $c$; span locations are used only as a loss mask. This factorization distinguishes \method from content-conditioned self-OPD:
OPSD/RLSD condition on verified or reference traces~\citep{zhao2026opsd,yang2026rlsd},
while SDPO injects a correct sibling rollout~\citep{huebotter2026sdpo}.
\method gives the teacher only a coarse type label; span locations are used solely as loss masks.

\paragraph{Per-step loss.}
\label{ssec:loss}
At step $k$, the per-rollout loss combines GRPO on non-span tokens, GRPO smoothly reintroduced on span tokens during decay, and a sequence-normalized routed KL on each span class:
\begin{equation}
\begin{split}
\mathcal{L}^{(k)}(\hat y; \theta) \;=\;& \mathcal{L}_\mathrm{GRPO}^{\mathcal{N}_y} \;+\; \rho_k\, \mathcal{L}_\mathrm{GRPO}^{\mathcal{S}_y} \\
&+\; \frac{\lambda_k}{|\hat y|} \sum_{t=1}^{|\hat y|}\!\Big[\, \mu_E\, \mathbb{1}\{t \in \mathcal{E}_y\}\, \mathrm{KL}\!\left(\pi_S \,\|\, \stopgrad\,\pi_T\right)_t \;+\; \mu_K\, \mathbb{1}\{t \in \mathcal{K}_y\}\, \mathrm{KL}\!\left(\stopgrad\,\pi_T \,\|\, \pi_S\right)_t \,\Big],
\end{split}
\label{eq:loss}
\end{equation}
where $\mu_E, \mu_K \in \{0, 1\}$ select the active routed actions, $\lambda_k$ is the KL weight schedule (\S\ref{ssec:decay}), $\rho_k = 1 - \lambda_k / w_0 \in [0, 1]$ smoothly returns span tokens to GRPO as $\lambda_k \to 0$, and the per-position $\mathrm{KL}(\cdot)_t$ is the standard token-level KL evaluated on the same causal prefix $\hat y_{<t}$. The \textbf{default} \method configuration is $(\mu_E, \mu_K) = (0, 1)$: FKL on $\mathcal{K}_y$ only. Per-vocabulary KL is pointwise-clipped at $\tau = 0.05$ before summation~\citep{zhao2026opsd}; the implementation accumulates each branch span-mean and applies an explicit $|\mathcal{S}_y|/|\hat y|$ multiplier so Eq.~\eqref{eq:loss} matches the optimizer step (App.~\ref{app:hparams-train}). The asymmetric local action of FKL vs RKL is what motivates this routing: FKL pressure scales with the student--teacher gap regardless of student mass, lifting under-allocated key tokens, while RKL pressure is student-mass-scaled and is sharper on confident-but-wrong tokens but more sensitive to annotation noise --- formalized in Lemma~\ref{lem:softmax} and Cor.~\ref{cor:fkl-underalloc} (\S\ref{sec:theory}).

\paragraph{Decay-to-GRPO and symmetric thinking.}
\label{ssec:decay}
The KL coefficient is held at $w_0$ during a short warm-up, anneals linearly to zero, and stays at zero afterwards:
\begin{equation}
\lambda_k =
\begin{cases}
w_0, & k < t_\mathrm{start},\\
w_0 \!\left(1-\dfrac{k-t_\mathrm{start}}{T_\mathrm{decay}}\right), & t_\mathrm{start} \le k \le t_\mathrm{start}+T_\mathrm{decay},\\
0, & k > t_\mathrm{start}+T_\mathrm{decay},
\end{cases}
\label{eq:lambda-schedule}
\end{equation}
with $w_0 = 0.5$, $t_\mathrm{start} = 10$, $T_\mathrm{decay} = 30$. After step $t_\mathrm{start} + T_\mathrm{decay}$ the loss reduces to pure GRPO, the teacher forward pass is skipped, and the privileged channel is closed --- the finite-decay schedule is what makes the cumulative privileged-gradient exposure bound (Prop.~\ref{prop:exposure}) finite. The teacher is periodically synced every $N=10$ steps during the KL-active phase~\citep{huebotter2026sdpo,yang2026rlsd}, avoiding the EMA feedback loop of~\citet{kim2026why}. Thinking is symmetric (both student and teacher in Think mode), so the student's rollout distribution and the teacher's supervision live in the same surface space; the asymmetric (student NoThink, teacher Think) configuration of~\citet{huebotter2026sdpo,zhao2026opsd} is reported as a negative ablation in App.~\ref{app:a9-asym}.

\section{Theoretical Analysis}
\label{sec:theory}

We retain three load-bearing results: a softmax gradient identity (Lemma~\ref{lem:softmax}, Cor.~\ref{cor:fkl-underalloc}), a cumulative privileged-gradient exposure bound (Prop.~\ref{prop:exposure}), and a conditional key-span signal lower bound (Prop.~\ref{prop:alignment}); supporting derivations are deferred to App.~\ref{app:proofs}.

\paragraph{Setup.}
\label{ssec:thy-setup}
Let $\pi_S(\cdot \mid x, \hat y_{<t}) := \pi_\theta(\cdot \mid x, \hat y_{<t})$ and $\pi_T(\cdot \mid x, c, \hat y_{<t}) := \pi_\theta(\cdot \mid x, c, \hat y_{<t})$. Denote forward / reverse KL as $\mathrm{KL}_F(p, q) := \mathrm{KL}(p\|q)$, $\mathrm{KL}_R(p, q) := \mathrm{KL}(q\|p)$. Spans $\mathcal{E}_y \sqcup \mathcal{K}_y \sqcup \mathcal{N}_y$ partition rollout positions with $|\mathcal{E}_y \cup \mathcal{K}_y| \le \alpha |\hat y|$, $\alpha = 0.25$. Span masks, teacher logits, and verifier advantages are stop-gradient minibatch surrogates (matching GKD~\citep{agarwal2024gkd}). We use a standard \emph{score-operator bound}: for all zero-sum $a \in \mathbb{R}^{|\mathcal{V}|}$ ($\sum_v a_v = 0$),
\begin{equation}
\Big\|\sum_v a_v \nabla_\theta \log \pi_\theta(v \mid x, \hat y_{<t})\Big\|^2 \le C_s^2 \sum_v a_v^2,
\label{eq:score-op-bound}
\end{equation}
motivated in practice by gradient clipping, normalization, and bounded training trajectories.

\paragraph{Mechanism: regime-dependent pointwise asymmetry.}
\label{ssec:thy-mechanism}

\begin{lemma}[Softmax gradient identities]
\label{lem:softmax}
With $r(v) := \log[\pi_\theta(v) / \pi_T(v)]$ and $\bar r := \mathbb{E}_{\pi_\theta}[r]$,
\begin{equation}
\frac{\partial \mathrm{KL}_R(\pi_T, \pi_\theta)}{\partial \ell_v} = \pi_\theta(v)\big(r(v) - \bar r\big),
\quad
\frac{\partial \mathrm{KL}_F(\pi_T, \pi_\theta)}{\partial \ell_v} = \pi_\theta(v) - \pi_T(v).
\label{eq:kl-grads}
\end{equation}
\end{lemma}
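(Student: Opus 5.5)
The plan is a direct computation in logit coordinates. Parametrize the student at a fixed position by logits $\ell \in \mathbb{R}^{|\mathcal{V}|}$, so that $\pi_\theta(v) = e^{\ell_v}/\sum_u e^{\ell_u}$. The teacher $\pi_T$ is treated as a constant, since it sits under $\stopgrad$ in Eq.~\eqref{eq:loss}. The only ingredient is the softmax Jacobian
\begin{equation*}
\frac{\partial \pi_\theta(u)}{\partial \ell_v} = \pi_\theta(u)\big(\mathbb{1}\{u=v\} - \pi_\theta(v)\big),
\end{equation*}
or, equivalently, $\partial \log \pi_\theta(u)/\partial \ell_v = \mathbb{1}\{u=v\} - \pi_\theta(v)$. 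Both identities then follow by the chain rule.

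\textbf{Forward KL (second identity).} Here $\mathrm{KL}_F(\pi_T,\pi_\theta) = \mathrm{KL}(\pi_T\|\pi_\theta) = \sum_u \pi_T(u)\log\pi_T(u) - \sum_u \pi_T(u)\log\pi_\theta(u)$. The entropy term does not depend on $\ell$. Differentiating the cross-entropy term with the log-softmax Jacobian gives
\begin{equation*}
-\sum_u \pi_T(u)\big(\mathbb{1}\{u=v\} - \pi_\theta(v)\big) = -\pi_T(v) + \pi_\theta(v)\sum_u \pi_T(u) = \pi_\theta(v) - \pi_T(v),
\end{equation*}
using only $\sum_u \pi_T(u) = 1$.

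\textbf{Reverse KL (first identity).} Here $\mathrm{KL}_R(\pi_T,\pi_\theta) = \mathrm{KL}(\pi_\theta\|\pi_T) = \sum_u \pi_\theta(u)\, r(u)$, with $r(u) = \log\pi_\theta(u) - \log\pi_T(u)$. The product rule gives
\begin{equation*}
\frac{\partial}{\partial \ell_v}\sum_u \pi_\theta(u) r(u) = \sum_u \frac{\partial \pi_\theta(u)}{\partial \ell_v}\, r(u) + \sum_u \pi_\theta(u)\frac{\partial \log\pi_\theta(u)}{\partial \ell_v}.
\end{equation*}
The second sum equals $\sum_u \partial\pi_\theta(u)/\partial\ell_v = \partial(\sum_u \pi_\theta(u))/\partial \ell_v = 0$. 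Substituting the Jacobian into the first sum gives
\begin{equation*}
\pi_\theta(v) r(v) - \pi_\theta(v)\sum_u \pi_\theta(u) r(u) = \pi_\theta(v)\big(r(v) - \bar r\big).
\end{equation*}

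\textbf{Main obstacle.} The only step needing care is the reverse-KL case. The term produced by differentiating $r$ itself must be recognized as vanishing, which holds because probabilities sum to one. One must also keep $\pi_T$ fixed: the teacher shares parameters $\theta$ but is stop-gradiented, so it contributes no derivative. A minor technical point is support. Assume $\pi_T(v) > 0$ wherever $\pi_\theta(v) > 0$ so that $r$ is finite; this is automatic because $\pi_T$ is itself a softmax. With that, the identities hold exactly, and Cor.~\ref{cor:fkl-underalloc} can read off the student-mass factor $\pi_\theta(v)$ in the reverse-KL gradient and its absence in the forward-KL gradient.
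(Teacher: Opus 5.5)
Your proposal is correct and follows the paper's proof essentially step for step. Both arguments differentiate in logit coordinates via the softmax Jacobian and the chain rule, with the teacher held fixed. In the reverse-KL case, both observe that the term from differentiating $\log\pi_\theta$ inside $r$ cancels because the probabilities sum to one (the paper writes it as $\pi_\theta(v)-\pi_\theta(v)$, you as $\partial_{\ell_v}\sum_u \pi_\theta(u) = 0$), and the forward-KL computation is identical.
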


\begin{corollary}[Asymmetric pointwise pressure]
\label{cor:fkl-underalloc}
Fix a token $v$ with $q := \pi_T(v \mid c)$, $p := \pi_\theta(v)$, $r(v) := \log(p/q)$, and $\bar r := \mathbb{E}_{\pi_\theta}[r]$. Assuming $|\bar r|$ is bounded (ensured in practice by top-$K$ truncation and probability flooring), Eq.~\eqref{eq:kl-grads} yields a regime-dependent asymmetry:
\begin{itemize}[leftmargin=1.1em,itemsep=1pt,topsep=2pt]
\item \textbf{Under-allocated regime} ($p \ll q$): FKL gives a logit \emph{lift} of magnitude $\Theta(q - p) = \Theta(q)$, mass-independent in $p$, vs.\ an RKL lift of $\Theta(p\,|r(v) - \bar r|) = \Theta(p\log(q/p))$, vanishing as $p \to 0$. FKL is the dominant tool to raise teacher-supported tokens that the student under-allocates.
\item \textbf{Confident-wrong regime} ($q \ll p$, $v$ in the student-supported mode): RKL gives a logit \emph{down-pressure} of magnitude $\Theta(p\,|r(v) - \bar r|)$, scaling with both student over-confidence ($p$) and the teacher-disagreement gap ($\log(p/q)$), vs.\ an FKL down-pressure of $\Theta(p - q) = \Theta(p)$, unmodulated by the gap. RKL is the dominant tool to suppress student-confident tokens that the teacher disfavors.
\end{itemize}
Both directions follow directly from Lemma~\ref{lem:softmax} by substituting the regime-defining inequality.
\end{corollary}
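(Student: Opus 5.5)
The plan is to take the two logit-gradient formulas of Lemma~\ref{lem:softmax} as given and, for each regime, substitute the defining inequality into each formula. For each direction we then extract the leading-order magnitude in $p$ and $q$, using boundedness of $|\bar r|$ to control the centering term. Throughout, $p$ and $q$ are the student and teacher probabilities of the fixed token $v$, and we write $|\bar r|\le B$ for the assumed constant bound.

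\textbf{Under-allocated regime ($p\ll q$).} The FKL gradient is $\pi_\theta(v)-\pi_T(v)=p-q$. Since $p\le \epsilon q$ for small $\epsilon$, we get $(1-\epsilon)q \le q-p \le q$, so the lift magnitude is $\Theta(q)$ with no dependence on $p$. For RKL the gradient is $p\,(r(v)-\bar r)$ with $r(v)=\log(p/q)=-\log(q/p)$. By the triangle inequality, $|r(v)-\bar r|$ lies between $\log(q/p)-B$ and $\log(q/p)+B$. Once $q/p\ge e^{2B}$, both bounds are within a factor $2$ of $\log(q/p)$. Hence the RKL magnitude is $\Theta(p\log(q/p))$. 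Because $p\log(1/p)\to 0$ as $p\to0$ for fixed $q$, this lift vanishes. Comparing the two magnitudes, the ratio RKL/FKL is $\Theta\big((p/q)\log(q/p)\big)$, and $x\log(1/x)\to0$ as $x=p/q\to0$. This gives the dominance claim for FKL.

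\textbf{Confident-wrong regime ($q\ll p$).} The argument is symmetric. The FKL gradient $p-q$ lies in $[(1-\epsilon)p,\,p]$, so it is $\Theta(p)$ and does not depend on $\log(p/q)$. The RKL gradient is $p\,(\log(p/q)-\bar r)$. The same triangle-inequality step shows it is $\Theta(p\log(p/q))$ once $\log(p/q)\ge 2B$. It is positive there, which means down-pressure on the logit, since gradient descent lowers $\ell_v$. Its ratio to the FKL magnitude is $\Theta(\log(p/q))$, and this grows without bound as the teacher-disagreement gap widens. That gives the dominance claim for RKL. We also check signs: in the first regime both gradients are negative, which under descent means a lift, and in the second both are positive.

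\textbf{Main obstacle.} The delicate step is the centering term $\bar r$. Without the boundedness assumption, $\bar r$ could offset $\log(p/q)$, and the $\Theta$ statements would fail. To handle this, I will state the regimes quantitatively: $q/p\ge e^{2B}$ in the first case and $p/q\ge e^{2B}$ in the second. I will also note that top-$K$ truncation with probability flooring at $\delta$ gives $B\le\log(1/\delta)$. A second subtlety is that ``$p\to0$'' must be taken with $q$ held fixed, and $\bar r$ must stay bounded uniformly along that limit. I will state this explicitly so that the asymptotic claims are well defined.
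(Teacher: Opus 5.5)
Your proposal is correct and follows the paper's own route: the paper's proof is just the substitution of each regime inequality into the two logit gradients of Lemma~\ref{lem:softmax}, and your triangle-inequality control of the centering term, with explicit thresholds $q/p\ge e^{2B}$ and $p/q\ge e^{2B}$, makes quantitative what the paper leaves implicit. As an optional sharpening, note that $\bar r=\mathrm{KL}(\pi_\theta\|\pi_T)\ge \max\{0,\,p\log(p/q)-1/e\}$; hence the under-allocated lower bound $|r(v)-\bar r|\ge\log(q/p)$ needs no threshold, and in the confident-wrong regime it is exactly your hypotheses $|\bar r|\le B$ and $\log(p/q)\ge 2B$ that keep $p$ bounded away from $1$, which matters because the RKL gradient $p(r(v)-\bar r)$ carries a hidden factor $1-p$ and can vanish as $p\to 1$.
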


This bidirectional asymmetry explains why the dominant corner shifts with base capability: in the strong-base regime where our lift proxy indicates local under-allocation is a salient remaining error mode (\S\ref{sec:exp:setup}), FKL on $\mathcal{K}_y$ delivers non-vanishing logit lift; in weaker-base regimes where local over-confidence is more frequent, RKL on $\mathcal{E}_y$ targets the over-confident mode with strength scaled by the over-confidence itself.

\paragraph{Risk control: span masking and decay keep exposure finite.}
\label{ssec:thy-bound}
\citet{yang2026rlsd} show that, under all-token self-OPD, the per-sample gradient decomposes into a benign component plus a privileged-information-specific deviation $\delta_t(\theta; c)$ with $\mathbb{E}_c[\delta_t] = 0$ and second moment proportional to the privileged variance $V_t := \sum_v \mathrm{Var}_c[\pi_T(v|c)]$ (Eq.~\ref{eq:Vt}, App.). For \method's default action ($\mu_K = 1$, $\mu_E = 0$) the routed gradient is exactly the FKL gradient, whose privileged deviation $\delta_t = -\sum_v(\pi_T(v|c) - \bar\pi_T(v))\nabla_\theta \log\pi_S(v)$ is bounded by the score-operator constant.

\begin{proposition}[Cumulative exposure bound]
\label{prop:exposure}
Let $m_{k,t} \in \{0,1\}$ be the span mask at step $k$, position $t$. Under the score-operator bound,
\begin{equation}
\mathcal{E}_K \;:=\; \sum_{k=1}^{K} \lambda_k^2 \cdot \mathbb{E}\!\left[|\hat y_k|^{-1}\!\sum_{t} m_{k,t}\,\|\delta_t(\theta_k; c)\|^2\right] \;\le\; C_s^2 \sum_{k=1}^{K} \lambda_k^2 \cdot \mathbb{E}\!\left[|\hat y_k|^{-1}\!\sum_{t} m_{k,t} V_{k,t}\right].
\label{eq:exposure-bound}
\end{equation}
\end{proposition}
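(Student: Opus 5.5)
The plan is to prove a per-position pointwise bound from the score-operator bound \eqref{eq:score-op-bound}, convert it into a bound by the privileged variance $V_t$ by averaging over $c$, and then sum over the masked positions and steps. Every operation after the per-position step is linear and monotone in nonnegative quantities, so the inequality passes through unchanged.

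\emph{Step 1 (pointwise score-operator bound).} Fix a step $k$, a position $t$, the parameters $\theta_k$ and the prefix $(x,\hat y_{<t})$. For the default action ($\mu_K=1,\mu_E=0$) the deviation is $\delta_t(\theta_k;c) = -\sum_v a_v(c)\,\nabla_\theta\log\pi_S(v)$ with $a_v(c) := \pi_T(v\mid c)-\bar\pi_T(v)$ and $\bar\pi_T := \mathbb{E}_c[\pi_T(\cdot\mid c)]$. The vector $a(c)$ is zero-sum, since $\sum_v\pi_T(v\mid c)=\sum_v\bar\pi_T(v)=1$. Hence \eqref{eq:score-op-bound} applies directly and gives, for every realization of $c$,
\[
\|\delta_t(\theta_k;c)\|^2 \le C_s^2\sum_v\big(\pi_T(v\mid c)-\bar\pi_T(v)\big)^2 .
\]

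\emph{Step 2 (average over the privileged context).} Take the conditional expectation over $c$ given the prefix and $\theta_k$. The right-hand side becomes $C_s^2\sum_v\mathrm{Var}_c[\pi_T(v\mid c)] = C_s^2 V_{k,t}$, exactly the quantity in Eq.~\eqref{eq:Vt}. The pointwise clipping at $\tau$ and the stop-gradient on $\pi_T$ only shrink or leave $a(c)$ unchanged inside the operator, so they do not affect the bound.

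\emph{Step 3 (masking, normalization, and summation over steps).} Multiply by $m_{k,t}\in\{0,1\}$ and $|\hat y_k|^{-1}>0$, sum over $t$, take the outer expectation (tower property), multiply by $\lambda_k^2\ge 0$ and sum over $k=1,\dots,K$. Each operation preserves the inequality, which yields \eqref{eq:exposure-bound}. I would add two remarks on consequences rather than steps of the proof. First, because $\sum_t m_{k,t}\le\alpha|\hat y_k|$ and $V_{k,t}\le \bar V$, the right-hand side is at most $C_s^2\alpha\bar V\sum_k\lambda_k^2$. Second, under the schedule \eqref{eq:lambda-schedule} this sum is finite independently of $K$, which is the $\mathcal{O}(\alpha\sum_k\lambda_k^2)$ statement quoted in Fig.~\ref{fig:teaser}.

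\emph{Main obstacle.} The delicate point is Step 2's interchange with the mask. The mask $m_{k,t}$ and the length $|\hat y_k|$ come from the annotator and the rollout, and the annotator also produces the type label that enters $c$. So $m_{k,t}$ need not be independent of $c$, and one cannot simply pull $m_{k,t}$ outside $\mathbb{E}_c$. I would handle this by defining $V_{k,t}$ as the conditional variance of $\pi_T(v\mid c)$ given the $\sigma$-algebra generated by $(\theta_k, x, \hat y_{<t}, m_{k,\cdot}, |\hat y_k|)$. With $\bar\pi_T$ defined by the same conditional mean, Step 1 still gives a zero-sum vector, and the mask is measurable with respect to the conditioning. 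If the paper's $V_t$ is intended to be the unconditional-in-mask version, the statement instead requires the assumption that the mask is conditionally independent of $c$ given the prefix, and I would state that assumption explicitly. The rest of the argument is routine bookkeeping.
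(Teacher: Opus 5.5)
Your proposal is correct and follows the paper's proof essentially step for step. Both use the zero-sum vector $a_v=\pi_T(v\mid c)-\bar\pi_T(v)$ with the score-operator bound, average over $c$ to get $C_s^2V_{k,t}$, then mask, normalize by $|\hat y_k|$, take the minibatch expectation and sum with weights $\lambda_k^2$, and both state the same $C_s^2\alpha\bar V\sum_k\lambda_k^2$ corollary.

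Your observation that $m_{k,t}$ and the type label in $c$ both come from the annotator is a genuine subtlety that the paper passes over: its Step~2 simply pulls $m_{k,t}$ and $|\hat y_k|$ outside $\mathbb{E}_c$. So stating the conditional-independence assumption is the faithful repair. Your alternative of re-centering $\bar\pi_T$ on the enlarged $\sigma$-algebra $\mathcal{G}$, however, also changes $\delta_t$ on the left-hand side. If you keep the paper's $\bar\pi_T$ and replace $V_{k,t}$ by $\sum_v\mathbb{E}[a_v^2\mid\mathcal{G}]$, whose prefix-conditional mean is exactly $V_{k,t}$, the statement's left-hand side is left intact.
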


This bound is the no-harm side of \method: coverage controls bandwidth and decay controls duration. If the span mask covers at most $\alpha$ tokens per rollout (\method enforces $\alpha = 0.25$) and the masked privileged variance is uniformly bounded by $\bar V$, then $\mathcal{E}_K \le C_s^2 \alpha \bar V \sum_k \lambda_k^2$. Under the finite-decay schedule (\S\ref{ssec:decay}), $\sum_k \lambda_k^2 \le \Lambda^2$ regardless of training horizon $K$, so $\mathcal{E}_K = O(\alpha \Lambda^2)$ stays finite even as $K \to \infty$. We treat this as a controlled risk model: it formalizes that \method does not exhibit the unbounded $O(K)$ growth a persistent all-token KL baseline would under matched assumptions, with the caveat that empirical baselines combining clipping, decay, or routing modify this scaling.

\paragraph{Conditional positive signal on selected key spans.}
\label{ssec:thy-alignment}
The exposure bound only says \method does not hurt long-horizon training. To explain why it also \emph{helps}, we need a positive signal on the spans the annotator selects. Restricting to \method's default action ($\mu_K = 1$, $\mu_E = 0$):

\begin{proposition}[Key-span signal lower bound, default action]
\label{prop:alignment}
Suppose the annotator achieves precision $q_K$ on key spans (fraction of selected tokens that are true critical), with false-positive misalignment bounded by $B_K \ge 0$. Assume an oracle alignment margin $\gamma_K > 0$: on a true key position, $\langle g_\mathrm{FKL}(t), \tilde g(t) \rangle \ge \gamma_K$. Let $p_K := \mathbb{E}[|\mathcal{K}_y|/|\hat y|]$. Then
\begin{equation*}
\mathbb{E}\big[\langle g^\mathrm{sel}_k, \tilde g_k\rangle\big] \;\ge\; \lambda_k\, p_K \big(q_K \gamma_K - (1-q_K) B_K\big),
\end{equation*}
which is positive whenever annotator precision exceeds $q_K^* := B_K / (\gamma_K + B_K)$.
\end{proposition}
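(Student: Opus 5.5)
The plan is to write the selected gradient at step $k$ explicitly, split it position by position into true-key and false-positive contributions, and bound each class separately. Under the default action $(\mu_E,\mu_K)=(0,1)$, Eq.~\eqref{eq:loss} gives the KL-channel gradient
\[
g^{\mathrm{sel}}_k = \frac{\lambda_k}{|\hat y|}\sum_t \mathbb{1}\{t\in\mathcal{K}_y\}\, g_{\mathrm{FKL}}(t),
\]
where $g_{\mathrm{FKL}}(t)$ is the per-position FKL gradient. Its logit-space form comes from Lemma~\ref{lem:softmax}; the proof does not need that form, only the alignment inequalities. Here $\tilde g_k$ denotes the per-position oracle direction $\tilde g(t)$, aggregated in the same way. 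Because masks, teacher logits and the schedule are all stop-gradient surrogates, $\lambda_k$ is deterministic and factors out of the expectation. Bilinearity then gives
\[
\mathbb{E}\langle g^{\mathrm{sel}}_k,\tilde g_k\rangle = \lambda_k\,\mathbb{E}\Big[|\hat y|^{-1}\sum_{t\in\mathcal{K}_y}\langle g_{\mathrm{FKL}}(t),\tilde g(t)\rangle\Big].
\]
This assumes the oracle inner product is read per position, as the margin assumption is stated. I will state that as the interpretation of $\langle g,\tilde g\rangle$ so that cross-position terms do not arise.

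Next, split $\mathcal{K}_y = \mathcal{K}_y^{\mathrm{true}}\sqcup\mathcal{K}_y^{\mathrm{fp}}$.
\begin{itemize}
\item On true key positions, the margin assumption gives each inner product $\ge\gamma_K$.
\item On false positives, the misalignment bound gives each inner product $\ge -B_K$.
\end{itemize}
So the bracketed sum is at least $\gamma_K|\mathcal{K}^{\mathrm{true}}_y| - B_K|\mathcal{K}^{\mathrm{fp}}_y|$. Dividing by $|\hat y|$ and taking expectations leaves $\gamma_K\,\mathbb{E}[|\mathcal{K}^{\mathrm{true}}_y|/|\hat y|] - B_K\,\mathbb{E}[|\mathcal{K}^{\mathrm{fp}}_y|/|\hat y|]$.

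The main obstacle is converting these two expectations into $q_K p_K$ and $(1-q_K)p_K$. The definition of precision must be pinned down so that this step is exact: I will define $q_K$ as the ratio $\mathbb{E}[|\mathcal{K}^{\mathrm{true}}_y|/|\hat y|]/\mathbb{E}[|\mathcal{K}_y|/|\hat y|]$, a length-weighted micro-precision. With that definition both identities hold by construction, since $|\mathcal{K}^{\mathrm{fp}}_y| = |\mathcal{K}_y| - |\mathcal{K}^{\mathrm{true}}_y|$. If one instead wants per-rollout precision, there are two options:
\begin{itemize}
\item add the assumption that precision is independent of the normalized span size $|\mathcal{K}_y|/|\hat y|$, or
\item condition on the rollout and use $\mathbb{E}[XY]=\mathbb{E}[X]\mathbb{E}[Y]$ under that independence.
\end{itemize}
I will note this as a modeling assumption rather than try to remove it.

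Combining the two parts yields $\lambda_k p_K(q_K\gamma_K - (1-q_K)B_K)$. Positivity is then elementary: for $\lambda_k,p_K>0$, the bound is positive iff $q_K(\gamma_K+B_K) > B_K$, i.e.\ $q_K > B_K/(\gamma_K+B_K) = q_K^*$. Since $B_K\ge0$ and $\gamma_K>0$, we have $q_K^*\in[0,1)$, so the threshold is nontrivial and attainable.
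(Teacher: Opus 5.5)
Your proof is correct and gives the same bound. Its skeleton matches the paper's: split the selected key positions into true-critical and false-positive ones, bound the per-position inner products by $\gamma_K$ and $-B_K$, and weight by $p_K$. Where you differ is in how precision is formalized.

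The paper attaches a Bernoulli indicator $Z_t^K$ with $\Pr[Z_t^K=1]=q_K$ to each selected position. It takes the expectation over $Z_t^K$ with the gradients held fixed, averages under the per-set mean $\overline{g}^K_k$, and only then brings in $p_K$ ``by linearity.'' That last step is justified because the success probability is the same $q_K$ for every rollout. This is exactly the independence between precision and normalized span size that you flag.

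Your deterministic partition, with the length-weighted micro-precision $q_K := \mathbb{E}[|\mathcal{K}^{\mathrm{true}}_y|/|\hat y|]/p_K$, makes the identity exact without any probabilistic model of criticality. It is also implied by the paper's model: with conditional success probability $q_K$ given the rollout, your ratio equals $q_K$. So your version is slightly more general and shows explicitly where the $p_K$ factor comes from. Writing $g^{\mathrm{sel}}_k$ directly in the sequence-normalized form of Eq.~\eqref{eq:loss} also avoids the paper's implicit passage from per-set means to the $|\mathcal{K}_y|/|\hat y|$-weighted gradient.

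What the indicator formalism buys the paper is that it extends verbatim to the RKL/error-span branch and to the joint $(\mu_E,\mu_K)=(1,1)$ corner. Your remark that positivity also requires $\lambda_k>0$ and $p_K>0$ is correct; the statement leaves it implicit.
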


The error-span / RKL extension and the joint $(\mu_E, \mu_K) = (1, 1)$ corner are in App.~\ref{app:proof-alignment}.

\paragraph{Risk-penalized utility: why the default is FKL-on-$\mathcal{K}$.}
Combining Prop.~\ref{prop:alignment} with Prop.~\ref{prop:exposure}, and writing $\Lambda_1 := \sum_k \lambda_k$, $\Lambda_2 := \sum_k \lambda_k^2$, define the risk-penalized utility of the key-span branch over the KL-active window as
\begin{equation}
U_K \;:=\; \underbrace{\Lambda_1\, p_K \big(q_K \gamma_K - (1-q_K) B_K\big)}_{\text{alignment signal}} \;-\; \kappa \cdot \underbrace{\Lambda_2\, C_s^2\, p_K\, \bar V_K}_{\text{leakage exposure}},
\label{eq:net-utility}
\end{equation}
with risk coefficient $\kappa > 0$ weighting leakage against alignment, and $\bar V_K$ the masked privileged variance on $\mathcal{K}_y$. This explains the default $(\emptyset, \mathrm{FKL}, \emptyset)$: on $\mathcal{K}_y$ the alignment signal is positive once annotator precision exceeds $q_K^*$ and Cor.~\ref{cor:fkl-underalloc} prevents it from vanishing under under-allocation; on $\mathcal{N}_y$ no per-class mechanism guarantees positive alignment so any KL action pays only exposure and $\emptyset$ dominates; App.~\ref{app:proof-corner} shows that under endpoint-alignment and density-floor assumptions no interior mixing $\beta \in (0,1)$ dominates this corner. Mask coverage $\alpha$ and finite decay $\Lambda_2$ jointly bound the leakage term, while $\Lambda_1$ remains available for alignment.

\paragraph{Empirical proxy: key-token probability lift.}
\label{ssec:thy-proxies}
\begin{wraptable}{r}{0.36\linewidth}
\vspace{-1.0em}
\centering\small
\setlength{\tabcolsep}{4pt}
\caption{$\Delta_\mathrm{lift}$ on teacher-supported $\mathcal{K}_y$ tokens; same held-out token set across methods (post-update).}
\label{tab:lift-proxy}
\begin{tabular}{lcc}
\toprule
Method & $\Delta_\mathrm{lift}$ & vs GRPO \\
\midrule
\method-FKL & $+0.145$ & $+168\%$ \\
\method-RKL & $+0.078$ & $+44\%$ \\
RLSD        & $+0.068$ & $+26\%$ \\
GRPO        & $+0.054$ & --- \\
SRPO        & $+0.027$ & $-50\%$ \\
SDPO        & $+0.019$ & $-65\%$ \\
\bottomrule
\end{tabular}
\vspace{-1.0em}
\end{wraptable}
As a proxy for the unobservable $\tilde g$ in Prop.~\ref{prop:alignment}, we measure on the held-out validation set the per-step log-prob lift $\Delta_\mathrm{lift} := \mathbb{E}_{t \in \mathcal{K}_y,\,\pi_T^{(k)}(\hat y_t) > \pi_{\theta_k}(\hat y_t)}[\log \pi_{\theta_{k+1}}(\hat y_t) - \log \pi_{\theta_k}(\hat y_t)]$ on rollout tokens inside teacher-supported key spans, conditioned at the pre-update policy. \method-FKL realizes $+0.145$\,nats ($+168\%$ vs GRPO), matching the mass-independent lift predicted by Cor.~\ref{cor:fkl-underalloc}, while all-token SDPO/SRPO fall \emph{below} GRPO. App.~\ref{app:case-study} reports a complementary $8.5\times$ credit concentration ratio for \method (vs $1.0\!-\!1.1\times$ for all-token baselines), localizing the support narrowing of Prop.~\ref{prop:exposure}.

\section{Experiments}
\label{sec:exp}

\subsection{Setup}
\label{sec:exp:setup}

\paragraph{Model, data, evaluation.}
\label{para:strong-base}
We train Qwen3-8B~\citep{qwen3report} on the OpenThoughts-114k math subset~\citep{openthoughts} (up to 30K problem--solution pairs) with verl~\citep{verl} on H100 GPUs, using DAPO clip-higher ($\varepsilon_\mathrm{low}{=}0.2$, $\varepsilon_\mathrm{high}{=}0.28$)~\citep{yu2025dapo} to isolate distillation-induced entropy dynamics. Qwen3-8B is in the \emph{strong-base regime} (base \texttt{avg@8} $\ge 60\%$ on each in-distribution math benchmark; App.~\ref{app:base-eval}); a cross-scale check on Qwen3-1.7B is reported alongside. In-distribution evaluation: \textbf{MATH-500}~\citep{lightman2024prm,hendrycks2021math}, \textbf{AIME 2024}~\citep{aime2024}/\textbf{2025}~\citep{aime2025}, \textbf{AMC 2023}~\citep{amc2023}; out-of-distribution: \textbf{GPQA-Diamond}~\citep{rein2023gpqa}. We follow the Qwen3 thinking-mode protocol ($T{=}0.6$, $p{=}0.95$, $k{=}20$); \texttt{avg@k} denotes per-problem mean accuracy over $k$ samples. Full hyperparameters and decoding configurations are in App.~\ref{app:hparams}.

\paragraph{Baselines.}
\label{para:repro}
GRPO~\citep{shao2024deepseekmath} with recent improvements~\citep{olmo2025,khatri2026art}; SDPO~\citep{huebotter2026sdpo} (frozen teacher, JSD $\alpha{=}0.5$, top-$K{=}100$); SRPO~\citep{xx2026srpo} (EMA $0.05$, $\beta{=}1$); RLSD~\citep{yang2026rlsd} (sync $N{=}10$, $\lambda$ decay $1.0{\to}0$ over $50$ steps). Each baseline is best-of-grid over learning rate and batch size. Official SDPO/SRPO training code is not publicly available; we re-implement the objectives and select checkpoints by OpenThoughts validation, never by held-out benchmark columns. Two failure symptoms (length collapse: SDPO $2027\!\to\!1042$ tokens, SRPO $1873\!\to\!759$; per-token entropy rise to ${>}4\times$ the GRPO baseline) appear within 150--200 steps under all swept configurations; the third (validation collapse) is documented in \S\ref{sec:exp:main}. Full sweep grid, transfer caveats, and the four-panel diagnostic (including the SRPO EMA feedback loop) are in App.~\ref{app:sdpo-sweep}--\ref{app:three-symptom}.

\subsection{Main Results}
\label{sec:exp:main}

Table~\ref{tab:main} reports held-out results across two scales. In the Qwen3-8B block, \method-FKL is best on MATH, both AIME splits, GPQA-Diamond, and the 5-benchmark AVG, improving AVG from GRPO's $78.75$ to $81.51$ ($+2.76$\,pp); the RKL routing slightly edges it on AMC~23. The largest FKL gains appear on AIME~25 ($+4.58$\,pp) and GPQA-Diamond ($+4.48$\,pp), where \method matches the Qwen3-8B base score within evaluation resolution while GRPO and all-token self-OPD baselines degrade. RLSD is the strongest prior self-OPD baseline ($78.99$ AVG) but remains $2.52$\,pp below \method; SDPO and SRPO trail GRPO by roughly $15$ and $9$\,pp, consistent with the collapse diagnostics in App.~\ref{app:three-symptom}. Cell-wise bootstrap 95\% CIs are reported in App.~\ref{app:base-eval}; because the smallest held-out sets are AIME's 30 problems, we treat per-benchmark gaps as mean differences rather than standalone significance claims.

\begin{table}[!htbp]
\centering\small
\setlength{\tabcolsep}{4pt}
\caption{\textbf{Main held-out results across scales.} Symmetric Think evaluation on four math benchmarks and GPQA-Diamond. Checkpoints are selected by OpenThoughts validation. AVG is the unweighted mean over columns. \textbf{Bold} / \underline{underline} mark best / second-best among trained methods within each model block; base rows are reference. Cell-wise bootstrap 95\% CIs are in App.~\ref{app:base-eval}.}
\label{tab:main}
\begin{tabular}{lcccccc}
\toprule
Method & MATH-500 & AIME 24 & AIME 25 & AMC 23 & GPQA-D & AVG \\
\midrule
\multicolumn{7}{l}{\emph{Qwen3-8B}} \\
\quad \textit{Base (ref.)}                       & \textit{96.80} & \textit{76.25} & \textit{67.50} & \textit{95.94} & \textit{58.27} & \textit{78.95} \\
\quad + GRPO                                     & 97.30 & 77.08 & 68.96 & 96.56 & 53.85 & 78.75 \\
\quad + SDPO                                     & 95.13 & 52.50 & 41.25 & 91.41 & 39.90 & 64.04 \\
\quad + SRPO                                     & 96.48 & 69.17 & 54.17 & 93.12 & 37.31 & 70.05 \\
\quad + RLSD                                     & 96.68 & 75.42 & 68.54 & 97.03 & \underline{57.26} & 78.99 \\
\quad + \method (RKL on $\mathcal{E}_y$)         & \underline{97.48} & \underline{78.54} & \underline{70.42} & \textbf{97.81} & 56.44 & \underline{80.14} \\
\quad + \method (FKL on $\mathcal{K}_y$)         & \textbf{97.83} & \textbf{80.21} & \textbf{73.54} & \underline{97.66} & \textbf{58.33} & \textbf{81.51} \\
\midrule
\multicolumn{7}{l}{\emph{Qwen3-1.7B}} \\
\quad \textit{Base (ref.)}                       & \textit{91.35} & \textit{43.75} & \textit{35.00} & \textit{86.56} & \textit{37.18} & \textit{58.77} \\
\quad + GRPO                                     & 90.33 & 44.58 & 35.00 & 81.56 & 37.18 & 57.73 \\
\quad + SDPO                                     & 81.73 & 16.25 & 17.50 & 51.25 & 27.41 & 38.83 \\
\quad + SRPO                                     & 86.02 & 24.17 & 23.75 & 62.81 & 35.82 & 46.51 \\
\quad + RLSD                                     & \underline{91.12} & \textbf{45.42} & 32.92 & \underline{84.38} & \textbf{38.89} & \underline{58.55} \\
\quad + \method (RKL on $\mathcal{E}_y$)         & \textbf{92.05} & \underline{44.92} & \textbf{38.33} & \textbf{86.82} & \underline{38.67} & \textbf{60.16} \\
\quad + \method (FKL on $\mathcal{K}_y$)         & 91.00 & 44.83 & \underline{36.30} & 83.38 & 36.77 & 58.46 \\
\bottomrule
\end{tabular}
\end{table}

\paragraph{Cross-scale corner inversion.}
Cor.~\ref{cor:fkl-underalloc} predicts that on a weaker base where confident-but-wrong tokens are more frequent, the RKL-on-$\mathcal{E}_y$ branch should dominate. The lower block of Tab.~\ref{tab:main} confirms this: \method-RKL reaches $60.16$ AVG, the only trained method exceeding the 1.7B base and $1.70$\,pp above the FKL corner, while SDPO/SRPO collapse $11$--$19$\,pp below GRPO. \method is therefore not a fixed FKL recipe but a corner-routed framework whose dominant action shifts with base capability. Fig.~\ref{fig:val_reward} shows the matching training-distribution dynamics.

\begin{figure}[h!]
  \centering
  \includegraphics[width=\linewidth]{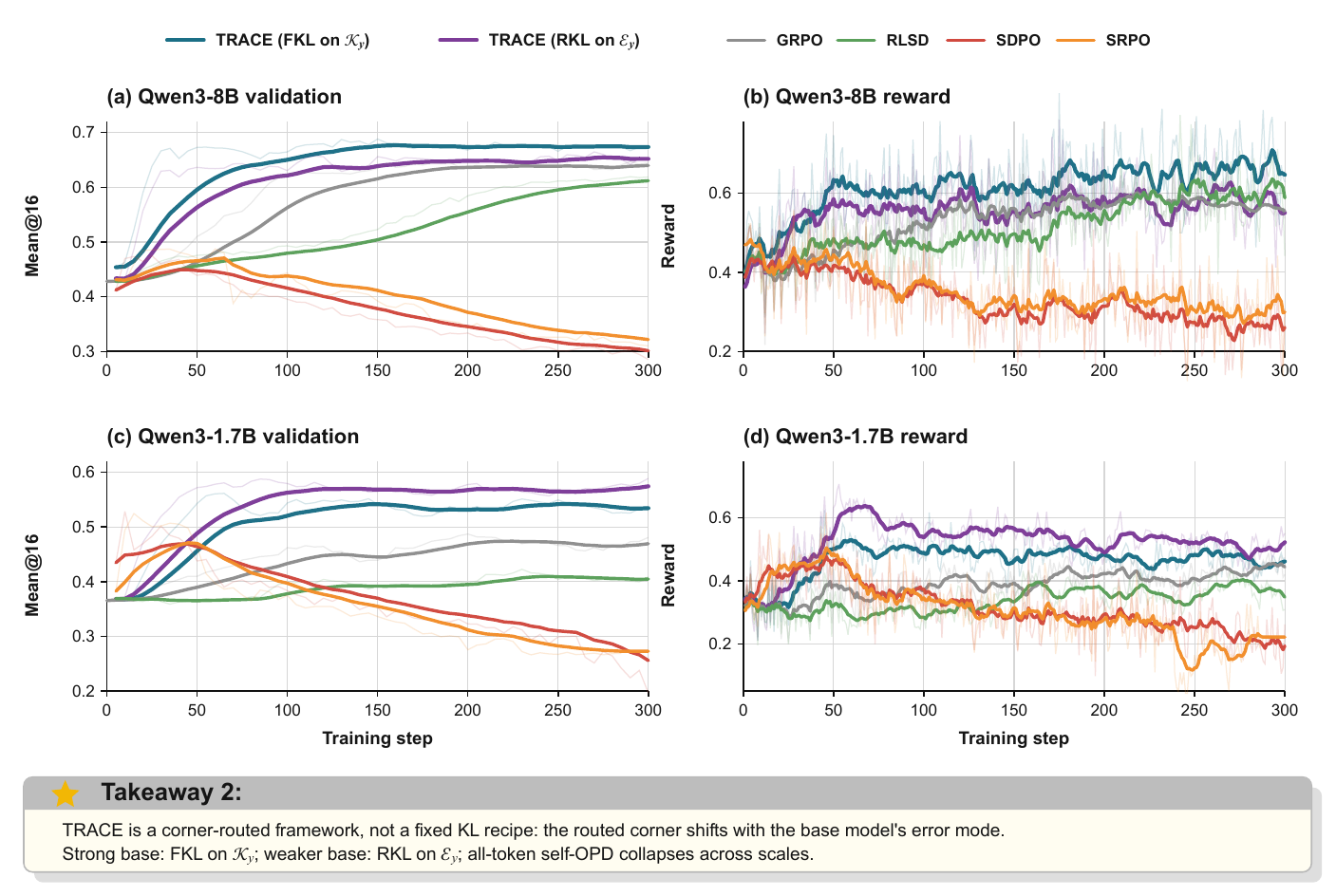}
  \caption{\textbf{Cross-scale training dynamics.} Validation mean@16 (left) and training reward (right) on OpenThoughts math, for Qwen3-8B (top) and Qwen3-1.7B (bottom). EMA $\alpha{=}0.85$. On the strong base, \method-FKL on $\mathcal{K}_y$ is the dominant corner and \method-RKL on $\mathcal{E}_y$ is second; on the weaker base, the dominant corner inverts to RKL-on-$\mathcal{E}_y$, matching Cor.~\ref{cor:fkl-underalloc}'s prediction when confident-but-wrong tokens become the dominant error mode. Both routed corners exceed GRPO at both scales, while all-token SDPO / SRPO peak early and collapse, ruling out base-model-specific implementation artifacts.}
  \label{fig:val_reward}
\end{figure}

Fig.~\ref{fig:val_reward} supports the regime claim behind the routed action space: the best corner changes with the base model's error profile, while all-token self-OPD remains unstable across scales.

\subsection{Ablations}
\label{sec:exp:ablations}

We ablate three design choices: mask localization, routing direction, and annotator quality; asymmetric-thinking and NoThink checks are in App.~\ref{app:a9-asym}--\ref{app:a10-nothink}. Routing direction is given by the two \method rows in Tab.~\ref{tab:main}. For mask localization, we fix the Qwen3-1.7B KL schedule and replace the RKL-on-$\mathcal{E}_y$ mask with all-token KL, random 25\%, or inverted non-critical 25\% controls. Fig.~\ref{fig:mask-ablation} shows that all-token KL falls below GRPO, random sparsity helps but trails \method, and inverted spans return to GRPO; \emph{which} tokens carry privileged signal matters more than how many.

\begin{figure}[!t]
  \centering
  \includegraphics[width=0.62\linewidth]{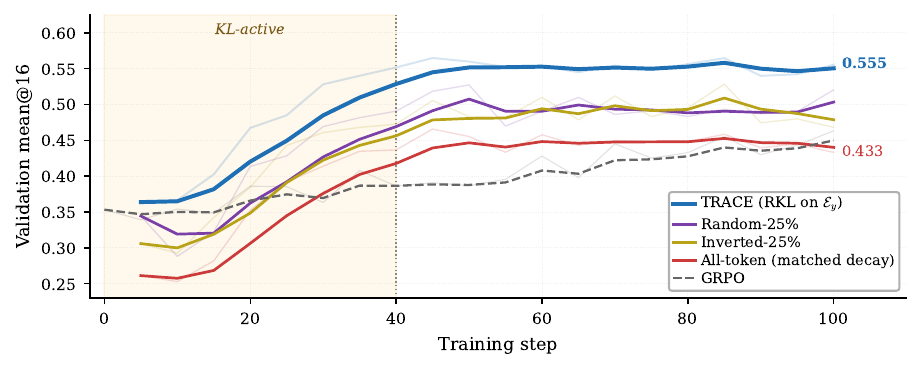}
  \includegraphics[width=\linewidth]{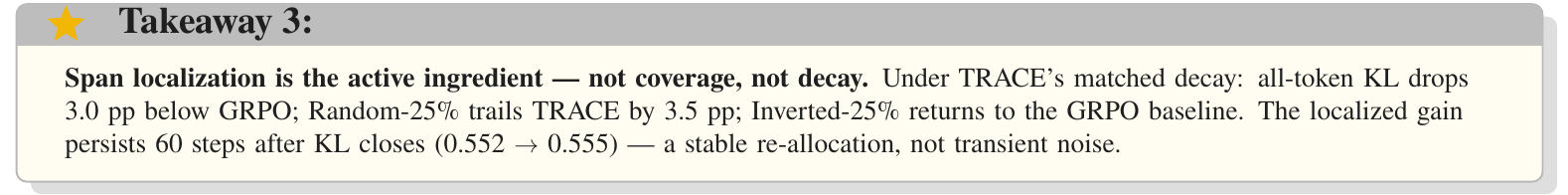}
  \caption{Qwen3-1.7B validation mean@16 (EMA $\alpha\!=\!0.55$), with RKL on $\mathcal{E}_y$ dominant. All KL variants share \method's decay schedule (Eq.~\ref{eq:lambda-schedule}); shading marks the KL-active window.}
  \label{fig:mask-ablation}
\end{figure}

For annotator quality, Tab.~\ref{tab:a8-annotator} sweeps from \texttt{qwen3.5-plus} ($+2.76$\,pp) to the actively-training student reused as its own annotator with no external supervisor ($+1.90$\,pp, ${\sim}69\%$ of the strong-API gain), a frozen mid-tier Qwen3-32B ($+1.09$\,pp), and a static Qwen3-8B copy of the student base ($+0.92$\,pp). That an \emph{online} self-annotator outperforms both a stronger but frozen model and a static student base weakens the reading that \method's gains stem from imported capability; per-tier dynamics are in App.~\ref{app:annot-dynamics}.

\begin{table}[!ht]
\centering\small
\setlength{\tabcolsep}{4pt}
\caption{\textbf{Annotator quality ablation (Qwen3-8B student).} Four annotator tiers; best per column \textbf{bold}; GRPO row matches Tab.~\ref{tab:main}.}
\label{tab:a8-annotator}
\resizebox{\linewidth}{!}{%
\begin{tabular}{lccccccc}
\toprule
Method (annotator) & MATH-500 & AIME 24 & AIME 25 & AMC 23 & GPQA-D & AVG & $\Delta$ \\
\midrule
GRPO (no annotator)                       & 97.30 & 77.08 & 68.96 & 96.56 & 53.85 & 78.75 & --- \\
\midrule
\method-strong (\texttt{qwen3.5-plus})    & \textbf{97.83} & \textbf{80.21} & \textbf{73.54} & \textbf{97.66} & \textbf{58.33} & \textbf{81.51} & $+2.76$ \\
\method-online-self (current policy)\,\textsuperscript{\dag} & 97.55 & 78.54 & 72.29 & 97.03 & 57.84 & 80.65 & $+1.90$ \\
\method-mid (Qwen3-32B)                   & 96.83 & 79.08 & 68.61 & 97.27 & 57.43 & 79.84 & $+1.09$ \\
\method-self (Qwen3-8B static)\,\textsuperscript{*}              & 97.34 & 75.92 & 72.19 & 96.11 & 56.78 & 79.67 & $+0.92$ \\
\bottomrule
\end{tabular}%
}
\\[2pt]
{\footnotesize \textsuperscript{\dag}~Online-self: actor weights reused as annotator via vLLM on the same H100, only during the 40-step KL-active window.\ \textsuperscript{*}~Static-self: Qwen3-8B from the student base, zero-shot with the same template.}
\end{table}

\section{Discussion and Conclusion}
\label{sec:discussion}

\method routes $\{\mathrm{FKL}, \mathrm{RKL}, \emptyset\}$ by token class on annotator-marked spans and then decays to GRPO. Its default FKL-on-$\mathcal{K}_y$ corner follows from non-vanishing key-token lift (Cor.~\ref{cor:fkl-underalloc}) and finite privileged-gradient exposure (Prop.~\ref{prop:exposure}); empirically, it improves GRPO by $+2.76$\,pp, preserves GPQA-Diamond OOD accuracy, and shifts to RKL on Qwen3-1.7B as predicted. Although we focus on math RLVR as a clean stress test, the mechanisms are loss-geometric rather than domain-specific: pointwise FKL/RKL asymmetry and finite exposure under mask coverage plus decay. The resulting design principle is to choose the routed corner by the base model's dominant residual error mode: FKL for under-allocated teacher-supported key tokens, RKL for locally over-confident teacher-disfavored tokens.


{\small
\bibliographystyle{plainnat}
\bibliography{references}
}

\appendix
\section{Notation and Preliminaries}
\label{app:notation}

\paragraph{Notation.}
Let $\pi_\theta$ be a language model parameterized by $\theta$. Given a problem $x$, $y = (y_1, \ldots, y_T)$ is generated autoregressively; $R(x, y) \in \{0, 1\}$ is a binary verifier; $c$ denotes \emph{privileged information} (a verified reasoning trace, environment feedback, or coarse type label). The student is $\pi_S(\cdot \mid x) := \pi_\theta(\cdot \mid x)$ and the privileged-context teacher is $\pi_T(\cdot \mid x, c) := \pi_\theta(\cdot \mid x, c)$ (same parametric family with different prompt). Our goal is to train $\pi_\theta$ to maximize verifier reward without incurring the epistemic suppression and OOD degradation observed under all-token self-OPD~\citep{kim2026why,yang2026rlsd}.

\paragraph{GRPO baseline and all-token self-OPD.}
Group Relative Policy Optimization~\citep{shao2024deepseekmath} samples $G$ rollouts $\{y^{(i)}\}_{i=1}^G \sim \pi_S(\cdot \mid x)$ and assigns each rollout the sequence-level advantage $A^{(i)} = (R^{(i)} - \mu_G) / \sigma_G$; all tokens in $y^{(i)}$ share that advantage. When the group has uniform reward, $A^{(i)} \equiv 0$ on every token (the dead-zone failure addressed by Prop.~\ref{prop:deadzone}). All-token self-OPD~\citep{huebotter2026sdpo,zhao2026opsd} densifies this signal by adding token-level KL guidance $D(\pi_T(\cdot|x,c,\hat y_{<t}) \,\|\, \pi_S(\cdot|x,\hat y_{<t}))$ whose support spans every response position regardless of semantic role, with gradients flowing only through $\pi_S$.

\paragraph{Forward and reverse KL.}
For a fixed rollout prefix, let $q_t := \pi_T(\cdot \mid x,c,\hat y_{<t})$ be the teacher distribution and $p_t := \pi_S(\cdot \mid x,\hat y_{<t})$ be the student distribution. We call $\mathrm{KL}(q_t\|p_t)$ \emph{forward KL} (FKL) and $\mathrm{KL}(p_t\|q_t)$ \emph{reverse KL} (RKL), always relative to teacher-to-student distillation. Importantly, FKL and RKL are not interchangeable: their per-token gradients depend asymmetrically on $p_t$ and $q_t$ (formalized in Lemma~\ref{lem:softmax}, \S\ref{sec:theory}), so the divergence direction is itself a per-token-class design decision rather than a global hyperparameter.

\paragraph{Critical-span structure.}
For each rollout $y$ we receive a span partition $\{1, \ldots, |y|\} = \mathcal{E}_y \sqcup \mathcal{K}_y \sqcup \mathcal{N}_y$ from a privileged annotator (\S\ref{ssec:span-annot}), where $\mathcal{E}_y$ (\emph{error spans}, defined only on $R{=}0$ rollouts) and $\mathcal{K}_y$ (\emph{key spans}, defined only on $R{=}1$ rollouts) jointly cover at most $\alpha = 0.25$ of response tokens; $\mathcal{N}_y$ contains all remaining (non-span) tokens.

\section{Deferred Proofs}
\label{app:proofs}

We use the notation of \S\ref{ssec:thy-setup} throughout. All distributions
are on a finite vocabulary $\mathcal{V}$ unless stated otherwise.

\subsection{Span-restricted decomposition}
\label{app:proof-decomp}

\begin{proposition}[Span-restricted decomposition]
\label{prop:decomp}
The \method per-token loss in Eq.~\eqref{eq:loss} satisfies $\nabla_\theta \hat{\mathcal{L}} = \nabla_\theta \hat{\mathcal{L}}_\mathrm{span} + \nabla_\theta \hat{\mathcal{L}}_\mathrm{nonspan}$, with disjoint token support.
\end{proposition}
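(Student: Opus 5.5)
The plan is to regroup the terms of Eq.~\eqref{eq:loss} by token support, then use linearity of $\nabla_\theta$. Start from the per-step loss at step $k$ and write every term as a sum over positions $t \in \{1,\dots,|\hat y|\}$ with an indicator weight. The GRPO surrogate is a sum of per-token clipped-ratio terms $\ell^{\mathrm{GRPO}}_t(\theta)$, each scaled by the stop-gradient advantage $A$. Hence $\mathcal{L}_\mathrm{GRPO}^{\mathcal{N}_y} = \sum_t \mathbb{1}\{t\in\mathcal{N}_y\}\ell^{\mathrm{GRPO}}_t$ and $\mathcal{L}_\mathrm{GRPO}^{\mathcal{S}_y} = \sum_t \mathbb{1}\{t\in\mathcal{S}_y\}\ell^{\mathrm{GRPO}}_t$, with $\mathcal{S}_y := \mathcal{E}_y \cup \mathcal{K}_y$ and the same normalization as in the implementation. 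The KL terms are already position-indexed and carry the indicators $\mathbb{1}\{t\in\mathcal{E}_y\}$ and $\mathbb{1}\{t\in\mathcal{K}_y\}$.

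The two pieces are then defined as
\[
\hat{\mathcal{L}}_\mathrm{span} := \rho_k \mathcal{L}_\mathrm{GRPO}^{\mathcal{S}_y} + \frac{\lambda_k}{|\hat y|}\sum_t \Big[\mu_E \mathbb{1}\{t\in\mathcal{E}_y\}\mathrm{KL}_R{}_t + \mu_K \mathbb{1}\{t\in\mathcal{K}_y\}\mathrm{KL}_F{}_t\Big],
\qquad
\hat{\mathcal{L}}_\mathrm{nonspan} := \mathcal{L}_\mathrm{GRPO}^{\mathcal{N}_y}.
\]
By construction $\hat{\mathcal{L}} = \hat{\mathcal{L}}_\mathrm{span} + \hat{\mathcal{L}}_\mathrm{nonspan}$. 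Since $\nabla_\theta$ is linear, the gradient identity follows once each summand is differentiable in $\theta$ through $\pi_S$ alone.

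For disjoint support, I would define the token support of a loss as the set of positions $t$ whose per-token term has a nonzero indicator weight. Equivalently, it is the set of positions at which $\nabla_\theta \log\pi_\theta(\cdot\mid x,\hat y_{<t})$ can enter the gradient. Every term in $\hat{\mathcal{L}}_\mathrm{span}$ carries a factor $\mathbb{1}\{t\in\mathcal{E}_y\}$, $\mathbb{1}\{t\in\mathcal{K}_y\}$, or $\mathbb{1}\{t\in\mathcal{S}_y\}$, so its support lies in $\mathcal{S}_y$. The support of $\hat{\mathcal{L}}_\mathrm{nonspan}$ lies in $\mathcal{N}_y$. Because $\mathcal{E}_y\sqcup\mathcal{K}_y\sqcup\mathcal{N}_y$ is a partition (App.~\ref{app:notation}), $\mathcal{S}_y\cap\mathcal{N}_y=\emptyset$.

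The main obstacle is checking that nothing couples positions through $\theta$-dependent quantities that escape the indicators. Four items need checking:
\begin{itemize}
\item The masks $m_{k,t}$, the advantages, and the teacher logits are stop-gradient surrogates, as stated in the Setup, so they contribute no gradient.
\item The normalizers $1/|\hat y|$ and $|\mathcal{S}_y|/|\hat y|$ and the schedules $\rho_k,\lambda_k$ do not depend on $\theta$ within a step.
\item The teacher term $\stopgrad\,\pi_T$ blocks gradient through the shared parameters, even though $\pi_T=\pi_\theta(\cdot\mid x,c)$.
\item The GRPO normalization constant (sequence- or token-mean) is a fixed scalar for the rollout.
\end{itemize}

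Any GRPO variant that normalizes by a $\theta$-dependent quantity would break this, so I would state the decomposition for the stop-gradient surrogate actually optimized. The parameter gradients from different positions are not orthogonal, since they share $\theta$. Disjointness is therefore a claim about which positions' score functions appear in each piece, not about orthogonality of the gradient vectors.
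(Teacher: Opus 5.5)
Your proposal is correct and follows essentially the same route as the paper. It splits the loss along the $\theta$-independent (stop-gradient) partition $\mathcal{S}_y \sqcup \mathcal{N}_y$ and applies linearity of $\nabla_\theta$; your explicit assignment of the $\rho_k$-GRPO and routed-KL terms to the span piece, and your checks on the normalizers and the teacher stop-gradient, spell out what the paper compresses into the observation that each $\ell_t$ depends on $\theta$ only through $\pi_\theta(\cdot\mid x, y_{<t})$.
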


\begin{proof}
By construction (Eq.~\ref{eq:loss}), the per-token loss $\ell_t(\theta)$
depends on $\theta$ only through $\pi_\theta(\cdot \mid x, y_{<t})$ at position
$t$. The partition $\mathcal{S}_y \sqcup \mathcal{N}_y = \{1, \ldots, |y|\}$
is $\theta$-independent (the span mask is provided by the annotator and
treated as \stopgrad). Hence
\begin{equation*}
\hat{\mathcal{L}}(\theta) \;=\; \frac{1}{|y|} \sum_{t=1}^{|y|} \ell_t(\theta)
\;=\; \underbrace{\frac{1}{|y|}\!\!\sum_{t \in \mathcal{S}_y}\!\! \ell_t(\theta)}_{\hat{\mathcal{L}}_\mathrm{span}}
\;+\; \underbrace{\frac{1}{|y|}\!\!\sum_{t \in \mathcal{N}_y}\!\! \ell_t(\theta)}_{\hat{\mathcal{L}}_\mathrm{nonspan}},
\end{equation*}
where the two sums have disjoint support sets. Linearity of $\nabla_\theta$
over the disjoint sum yields
$\nabla \hat{\mathcal{L}} = \nabla \hat{\mathcal{L}}_\mathrm{span} + \nabla \hat{\mathcal{L}}_\mathrm{nonspan}$.
\end{proof}

\subsection{Proof of Lemma~\ref{lem:softmax}}
\label{app:proof-softmax}

\begin{proof}
For $\pi_\theta(v) \propto \exp(\ell_v)$ with $Z := \sum_u \exp(\ell_u)$,
\begin{equation}
\frac{\partial \pi_\theta(v')}{\partial \ell_v} \;=\; \pi_\theta(v')\big(\mathbb{1}[v=v'] - \pi_\theta(v)\big),
\qquad
\frac{\partial \log \pi_\theta(v')}{\partial \ell_v} \;=\; \mathbb{1}[v=v'] - \pi_\theta(v).
\label{eq:softmax-derivs}
\end{equation}

\paragraph{Reverse KL.}
$\mathrm{KL}_R(\pi_T, \pi_\theta) = \mathrm{KL}(\pi_\theta \| \pi_T) = \sum_{v'} \pi_\theta(v')(\log \pi_\theta(v') - \log \pi_T(v'))$.
Differentiating:
\begin{align*}
\frac{\partial \mathrm{KL}_R}{\partial \ell_v}
&\;=\; \sum_{v'} \frac{\partial \pi_\theta(v')}{\partial \ell_v}(\log \pi_\theta(v') - \log \pi_T(v'))
+ \sum_{v'} \pi_\theta(v') \frac{\partial \log \pi_\theta(v')}{\partial \ell_v}\\
&\stackrel{\eqref{eq:softmax-derivs}}{=}
\sum_{v'} \pi_\theta(v')\big(\mathbb{1}[v=v'] - \pi_\theta(v)\big) r(v')
+ \sum_{v'} \pi_\theta(v')\big(\mathbb{1}[v=v'] - \pi_\theta(v)\big)\\
&\;=\; \pi_\theta(v) r(v) - \pi_\theta(v) \bar r + \pi_\theta(v) - \pi_\theta(v)\\
&\;=\; \pi_\theta(v) \cdot \big(r(v) - \bar r\big).
\end{align*}
This proves Eq.~\eqref{eq:kl-grads}.

\paragraph{Forward KL.}
$\mathrm{KL}_F(\pi_T, \pi_\theta) = \mathrm{KL}(\pi_T \| \pi_\theta) = -\sum_{v'} \pi_T(v') \log \pi_\theta(v') + \mathrm{const}$.
Differentiating:
\begin{equation*}
\frac{\partial \mathrm{KL}_F}{\partial \ell_v} \;=\; -\sum_{v'} \pi_T(v')\big(\mathbb{1}[v=v'] - \pi_\theta(v)\big)
\;=\; -\pi_T(v) + \pi_\theta(v) \cdot 1
\;=\; \pi_\theta(v) - \pi_T(v).
\end{equation*}
This proves Eq.~\eqref{eq:kl-grads}.
\end{proof}

\subsection{Reverse-KL pointwise pressure}
\label{app:proof-rkl-pointwise}

\begin{lemma}[Reverse-KL pointwise pressure]
\label{lem:rkl-pointwise}
For any token $v$, if $r(v) := \log[\pi_\theta(v) / \pi_T(v)] > \bar r := \mathbb{E}_{\pi_\theta}[r]$, then one Euclidean gradient-descent step on $\mathrm{KL}_R(\pi_T, \pi_\theta)$ with step size $\eta > 0$ satisfies $\ell_v^{(+)} = \ell_v - \eta \pi_\theta(v) (r(v) - \bar r) < \ell_v$.
\end{lemma}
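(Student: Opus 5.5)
The plan is to read the update rule straight off Lemma~\ref{lem:softmax}. We treat the logit vector $\ell = (\ell_u)_{u \in \mathcal{V}}$ as the optimization variable and hold the teacher distribution $\pi_T$ fixed, since it is stop-gradient. One Euclidean gradient-descent step on $\mathrm{KL}_R(\pi_T, \pi_\theta)$ with step size $\eta$ is then the coordinatewise map
\begin{equation*}
\ell_u^{(+)} = \ell_u - \eta \, \frac{\partial \mathrm{KL}_R(\pi_T,\pi_\theta)}{\partial \ell_u}, \qquad u \in \mathcal{V}.
\end{equation*}
Specializing to the coordinate $u = v$ and substituting the reverse-KL identity of Eq.~\eqref{eq:kl-grads}, namely $\partial \mathrm{KL}_R / \partial \ell_v = \pi_\theta(v)\big(r(v) - \bar r\big)$, gives the claimed closed form $\ell_v^{(+)} = \ell_v - \eta\,\pi_\theta(v)\big(r(v)-\bar r\big)$.

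The strict inequality then reduces to a sign check on the product $\eta\,\pi_\theta(v)\,(r(v)-\bar r)$:
\begin{itemize}
\item $\eta > 0$ by hypothesis;
\item $\pi_\theta(v) > 0$ because a softmax over finite logits assigns strictly positive mass to every token;
\item $r(v) - \bar r > 0$ by the assumption of the lemma.
\end{itemize}
The product is therefore strictly positive, and subtracting it gives $\ell_v^{(+)} < \ell_v$.

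The step that needs the most care is well-definedness. $r(v)$ and $\bar r$ must be finite, which requires $\pi_T(v) > 0$ on the support of $\pi_\theta$. With a softmax teacher this holds automatically, and in the implementation it is also ensured by the probability flooring mentioned in Cor.~\ref{cor:fkl-underalloc}. I would state this explicitly so the identity from Lemma~\ref{lem:softmax} applies.

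I would also remark that the statement concerns only the raw logit of $v$. Softmax is invariant to adding a constant to all logits, and the gradient components sum to zero, since $\sum_u \pi_\theta(u)\big(r(u)-\bar r\big) = 0$. Because of this, the decrease of $\ell_v$ relative to the other logits is also meaningful: for small $\eta$ it translates into a first-order decrease of $\pi_\theta(v)$. That extension is not needed for the lemma as stated.
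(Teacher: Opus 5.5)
Your argument is correct and is essentially the paper's proof: read $\partial \mathrm{KL}_R/\partial \ell_v = \pi_\theta(v)\big(r(v)-\bar r\big)$ off Lemma~\ref{lem:softmax}, then note that $\eta$, $\pi_\theta(v)$ and $r(v)-\bar r$ are all strictly positive. One caution on your closing aside, which you rightly mark as unnecessary: the zero-sum property does \emph{not} give a first-order decrease of $\pi_\theta(v)$, because $\Delta\log\pi_\theta(v) = \Delta\ell_v - \sum_u \pi_\theta(u)\,\Delta\ell_u = -\eta\,\pi_\theta(v)\big(r(v)-\bar r\big) + \eta\sum_u \pi_\theta(u)^2\big(r(u)-\bar r\big)$ uses the $\pi_\theta$-weighted mean of the logit changes rather than the uniform mean, and it can be positive (e.g.\ $\pi_\theta = (\tfrac12,\tfrac1{10},\tfrac25)$ with $r-\bar r = (2,\tfrac12,-\tfrac{21}{8})$ and $v$ the middle token gives $+\tfrac{7}{200}\eta$), which is exactly the caveat the paper records after Lemma~\ref{lem:pointwise}.
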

\begin{proof}
Direct corollary of Eq.~\eqref{eq:kl-grads}: $\partial \mathrm{KL}_R / \partial \ell_v = \pi_\theta(v)(r(v) - \bar r) > 0$ when $r(v) > \bar r$ (since $\pi_\theta(v) > 0$). One descent step decreases $\ell_v$.
\end{proof}

\begin{remark}[Precise ``confident wrong'' criterion]
The criterion is $\log[\pi_\theta(v)/\pi_T(v)] > \mathbb{E}_{\pi_\theta}[\log(\pi_\theta/\pi_T)]$, not simply $\pi_\theta(v) > \pi_T(v)$. The latter does not in general imply the former, since the student-mass-weighted average log-ratio depends on the full distribution shape.
\end{remark}

\subsection{Pointwise logit pressure on a token set}
\label{app:proof-pointwise}

\begin{lemma}[Pointwise logit pressure]
\label{lem:pointwise}
Fix $t$ and let $\mathcal{U} \subseteq \mathcal{V}$ satisfy $\pi_\theta(v) > \pi_T(v)$ for all $v \in \mathcal{U}$. A single Euclidean gradient-descent step on $\mathrm{KL}_F(\pi_T, \pi_\theta)$ with step size $\eta > 0$ satisfies $\ell_v^{(+)} = \ell_v - \eta(\pi_\theta(v) - \pi_T(v)) < \ell_v$ for every $v \in \mathcal{U}$.
\end{lemma}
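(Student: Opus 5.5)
The plan is to read the claim off the forward-KL half of Lemma~\ref{lem:softmax}, in the same way Lemma~\ref{lem:rkl-pointwise} follows from the reverse-KL half. Throughout, I treat the logits $\ell = (\ell_u)_{u \in \mathcal{V}}$ at the fixed position $t$ as the free variables, and I hold the teacher distribution $\pi_T$ fixed under stop-gradient, as in the setup of \S\ref{ssec:thy-setup}.

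\textbf{Step 1: the gradient.} By Eq.~\eqref{eq:kl-grads}, the gradient of the forward KL with respect to each logit is
\[
\partial \mathrm{KL}_F(\pi_T, \pi_\theta) / \partial \ell_v = \pi_\theta(v) - \pi_T(v).
\]
This holds coordinate-wise for every $v \in \mathcal{V}$, not just one token. So the full logit gradient is the vector $\pi_\theta - \pi_T$.

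\textbf{Step 2: the descent step.} One Euclidean gradient-descent step with step size $\eta$ is
\[
\ell^{(+)} = \ell - \eta(\pi_\theta - \pi_T).
\]
Reading off coordinate $v$ gives exactly the update $\ell_v^{(+)} = \ell_v - \eta(\pi_\theta(v) - \pi_T(v))$ stated in the lemma.

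\textbf{Step 3: the sign.} For each $v \in \mathcal{U}$, the hypothesis gives $\pi_\theta(v) - \pi_T(v) > 0$. Combined with $\eta > 0$, the subtracted term is strictly positive, so $\ell_v^{(+)} < \ell_v$. This holds simultaneously for all $v \in \mathcal{U}$, because the gradient was computed jointly in Step 1. No assumption on the size of $\mathcal{U}$ is needed.

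\textbf{Main subtlety.} The statement concerns logits, not probabilities, so I will not claim that $\pi_\theta(v)$ itself decreases. Probabilities can move either way, since the softmax normalizer also shifts. The lemma claims only logit pressure, and that is immediate. I will also remark that the descent is taken directly in logit space rather than through $\theta$. Transferring the claim to parameter space would need the score-operator bound~\eqref{eq:score-op-bound} or a linearization, and that is not part of this statement.
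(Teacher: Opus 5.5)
Your proposal is correct and follows the paper's proof: both read $\partial \mathrm{KL}_F/\partial \ell_v = \pi_\theta(v) - \pi_T(v)$ off Lemma~\ref{lem:softmax} and conclude $\ell_v^{(+)} < \ell_v$ for every $v \in \mathcal{U}$ from the hypothesis $\pi_\theta(v) > \pi_T(v)$ and $\eta > 0$. Your caveat that only logits, not probabilities, are controlled matches the remark the paper places right after the lemma.
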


\begin{proof}
Direct corollary of Eq.~\eqref{eq:kl-grads}. For each $v \in \mathcal{U}$,
the hypothesis $\pi_\theta(v) > \pi_T(v)$ gives
$\partial \mathrm{KL}_F / \partial \ell_v = \pi_\theta(v) - \pi_T(v) > 0$.
A gradient-descent step with step size $\eta > 0$ updates
$\ell_v^{(+)} = \ell_v - \eta(\pi_\theta(v) - \pi_T(v)) < \ell_v$.
\end{proof}

\begin{remark}
Lemma~\ref{lem:pointwise} characterizes only the direction of logit pressure;
the corresponding probability mass change $\pi_\theta^{(+)}(v) - \pi_\theta(v)$
involves the centered effect across the full vocabulary
$\partial \pi_\theta(v) / \partial \ell_u$, which can be negative or positive
depending on which other logits move. The corresponding mass dynamics are
captured by the natural-gradient analysis of Lemma~\ref{lem:natural}.
\end{remark}

\subsection{Distribution-space mass dynamics on a token set}
\label{app:proof-natural}

\begin{lemma}[Natural-gradient mass dynamics]
\label{lem:natural}
Under simplex updates with the natural gradient (Fisher-information metric) of $\mathrm{KL}_F(\pi_T, \pi_\theta)$, the mass $\pi_\theta(\mathcal{U})$ moves monotonically toward $\pi_T(\mathcal{U})$. In particular, if $\pi_\theta(\mathcal{U}) > \pi_T(\mathcal{U})$ initially, the mass strictly decreases toward $\pi_T(\mathcal{U})$.
\end{lemma}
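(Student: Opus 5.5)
We treat the natural-gradient update of $\mathrm{KL}_F(\pi_T,\pi_\theta)$ as a flow on the probability simplex. The Fisher--Rao geometry turns the Euclidean logit gradient of Lemma~\ref{lem:softmax} into a mass-space velocity. Our main tool will be that, for a softmax parametrization, the natural gradient of a function of $\pi_\theta$ is the Euclidean gradient premultiplied by the pseudo-inverse of the Fisher matrix.

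The Fisher matrix in logit coordinates is
\[
F = \mathrm{diag}(\pi_\theta) - \pi_\theta\pi_\theta^\top .
\]
This is exactly the Jacobian $\partial\pi_\theta/\partial\ell$ from Eq.~\eqref{eq:softmax-derivs}. With $g_v := \pi_\theta(v)-\pi_T(v)$, the natural-gradient logit velocity is $\dot\ell = -\eta F^{+} g$. The induced mass velocity is
\[
\dot\pi_\theta = F\dot\ell = -\eta F F^{+} g .
\]
Here $FF^{+}$ is the orthogonal projector onto $\mathrm{range}(F)$, which is the zero-sum subspace $\{a:\sum_v a_v=0\}$ when $\pi_\theta$ has full support. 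Since $\sum_v g_v = 0$, $g$ already lies in that range, so $FF^{+}g = g$.

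This gives the clean dynamics
\[
\dot\pi_\theta(v) = -\eta\big(\pi_\theta(v)-\pi_T(v)\big),
\]
a linear contraction toward $\pi_T$. The equivalent and more standard route is the mirror/Fisher–Rao flow: because $\mathrm{KL}_F$ is the Bregman divergence of negative entropy's dual, the natural gradient in expectation coordinates is the plain difference $\pi_\theta-\pi_T$.

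Summing over $v\in\mathcal{U}$ gives
\[
\tfrac{d}{dt}\big(\pi_\theta(\mathcal{U})-\pi_T(\mathcal{U})\big) = -\eta\big(\pi_\theta(\mathcal{U})-\pi_T(\mathcal{U})\big).
\]
Hence the gap evolves as $e^{-\eta t}$ times its initial value. It never changes sign and strictly decreases in magnitude, so the mass moves monotonically toward $\pi_T(\mathcal{U})$, strictly decreasing if initially above. For a discrete step of size $\eta\in(0,1]$, the gap is multiplied by $(1-\eta)$, giving the same conclusion. The simplex is preserved because the update is a convex combination $(1-\eta)\pi_\theta+\eta\pi_T$.

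The main obstacle is justifying the pseudo-inverse step, because $F$ is singular along the all-ones direction. We first restrict to full-support $\pi_\theta$, so that $\ker F=\mathrm{span}(\mathbf 1)$. We then note that the logit shift along $\mathbf 1$ is irrelevant to $\pi_\theta$. Finally, we verify directly that $\dot\ell_v = -\eta(1-\pi_T(v)/\pi_\theta(v))$, up to an additive constant, satisfies $F\dot\ell=-\eta g$. This direct check avoids any appeal to $F^{+}$.

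We will state the result for the continuous-time flow or for the step with $\eta\le1$, since a larger step could overshoot in the discrete case.
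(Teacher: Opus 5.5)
Your proof is correct and takes the same route as the paper. The paper simply asserts the Fisher-metric flow $d\pi_\theta(v)/dt=\pi_T(v)-\pi_\theta(v)$ (citing Amari and Martens) and sums over $\mathcal{U}$; you additionally derive that flow from $F=\mathrm{diag}(\pi_\theta)-\pi_\theta\pi_\theta^\top$ via the zero-sum range argument and solve it explicitly, which makes the ``monotonically toward, without crossing'' conclusion explicit.

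One caution on your side remark: the $(1-\eta)$ contraction holds for an Euler step taken directly on the simplex, not for a discrete natural-gradient step on the logits. That logit step, which is the setting your $F^{+}$ derivation suggests, gives $\pi^{+}(v)\propto\pi_\theta(v)\exp\bigl(\eta\,\pi_T(v)/\pi_\theta(v)\bigr)$, which is not a convex combination. It can move $\pi_\theta(\mathcal{U})$ away from $\pi_T(\mathcal{U})$ even for $\eta\le 1$. For example, $\pi_\theta=(0.5,0.01,0.49)$, $\pi_T=(0.01,0.4,0.59)$, $\mathcal{U}=\{1,2\}$, $\eta=0.1$ sends $0.51$ to about $0.65$, while the target is $0.41$. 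So restrict that remark to mean-coordinate steps or to the continuous flow.
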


\begin{proof}
The natural gradient of a divergence $\mathcal{D}$ on the simplex with respect
to the Fisher-information metric pre-conditions the Euclidean gradient. For
$\mathcal{D} = \mathrm{KL}_F(\pi_T, \pi_\theta) = -\sum_v \pi_T(v) \log \pi_\theta(v) + \mathrm{const}$,
the natural-gradient flow on $\pi_\theta$ in the simplex satisfies
\begin{equation*}
\frac{d \pi_\theta(v)}{d t} \;=\; \pi_T(v) - \pi_\theta(v),
\end{equation*}
i.e., $\pi_\theta(v)$ moves linearly toward $\pi_T(v)$
\citep{amari1998natural,martens2014new}. Summing over $v \in \mathcal{U}$:
\begin{equation*}
\frac{d \pi_\theta(\mathcal{U})}{d t} \;=\; \pi_T(\mathcal{U}) - \pi_\theta(\mathcal{U}),
\end{equation*}
which is monotone toward $\pi_T(\mathcal{U})$, decreasing whenever
$\pi_\theta(\mathcal{U}) > \pi_T(\mathcal{U})$ initially.
\end{proof}

\begin{remark}[Why a natural gradient is needed]
Under \emph{Euclidean} SGD on the logits (the standard implementation),
$\pi_\theta(\mathcal{U})$ does not generally decrease monotonically even when
$\pi_\theta(\mathcal{U}) > \pi_T(\mathcal{U})$ initially, because the softmax
re-normalizes after each logit update. Lemma~\ref{lem:natural} characterizes
the geometry-aware update on the simplex; we treat it as a complementary
characterization, not as a description of standard SGD dynamics.
The pointwise logit pressure of Lemma~\ref{lem:pointwise} is what holds
universally under Euclidean SGD.
\end{remark}

\subsection{Proof of Proposition~\ref{prop:exposure}}
\label{app:proof-exposure}

\begin{proof}
For reference: \citet[Prop.~1]{yang2026rlsd} show the privileged-information-specific deviation
\begin{equation}
\delta_t(\theta; c) := -\sum_v \big(\pi_T(v|x,c,y_{<t}) - \bar\pi_T(v|x,y_{<t})\big) \nabla_\theta \log \pi_S(v|x,y_{<t})
\label{eq:delta-def}
\end{equation}
has $\mathbb{E}_c[\delta_t] = 0$ and per-position privileged variance
\begin{equation}
V_t := \sum_v \mathrm{Var}_c[\pi_T(v|x,c,y_{<t})] \ge 0,
\label{eq:Vt}
\end{equation}
with $V_t = 0$ iff $\pi_T$ is independent of $c$ at $t$.

\paragraph{Step 1: Per-token bound on $\mathbb{E}_c[\|\delta_t\|^2]$.}
Let $a_v := \pi_T(v|x,c,y_{<t}) - \bar\pi_T(v|x,y_{<t})$, so $\sum_v a_v = 0$ and $\delta_t = -\sum_v a_v \nabla_\theta \log \pi_S(v)$. The score-operator bound~\eqref{eq:score-op-bound} applied to $a$ gives
$\|\delta_t\|^2 = \big\|\sum_v a_v \nabla_\theta \log \pi_S(v)\big\|^2 \le C_s^2 \sum_v a_v^2$,
and taking expectation over $c$,
\begin{equation*}
\mathbb{E}_c\!\big[\|\delta_t(\theta; c)\|^2\big] \;\le\; C_s^2 \sum_v \mathrm{Var}_c[\pi_T(v \mid x, c, y_{<t})] \;=\; C_s^2 \cdot V_t.
\end{equation*}
This step uses~\eqref{eq:score-op-bound} to control the cross-vocabulary covariance terms that a per-token bound on $\|\nabla \log \pi_S(v)\|$ alone could not.

\paragraph{Step 2: Span-restricted per-step sum.}
Under \method, $\delta_t$ contributes only at $t$ with $m_{k,t} = 1$ (non-span tokens are trained by GRPO without privileged-context teacher):
\begin{equation*}
\mathbb{E}_c\!\Bigg[\frac{1}{|y_k|}\sum_{t} m_{k,t}\,\|\delta_t\|^2\Bigg]
\;\le\; \frac{C_s^2}{|y_k|} \sum_{t} m_{k,t}\,V_{k,t}.
\end{equation*}

\paragraph{Step 3: Minibatch expectation.}
Taking expectation over the rollout $y_k$ and the span mask $m_k$,
\begin{equation*}
\mathbb{E}\!\Bigg[\frac{1}{|y_k|}\sum_{t} m_{k,t}\,\|\delta_t\|^2\Bigg]
\;\le\; C_s^2 \cdot \mathbb{E}\!\Bigg[\frac{1}{|y_k|}\sum_{t} m_{k,t}\,V_{k,t}\Bigg].
\end{equation*}
We deliberately retain the joint expectation of coverage and span-positioned variance, since by construction the annotator selects high-impact spans and these positions need not have variance equal to the rollout average.

\paragraph{Step 4: Sum over training horizon.}
Multiplying by $\lambda_k^2$ and summing yields the proposition statement:
\begin{equation*}
\mathcal{E}_K = \sum_{k=1}^{K} \lambda_k^2 \cdot \mathbb{E}\!\Bigg[\frac{1}{|y_k|}\sum_{t} m_{k,t}\,\|\delta_t\|^2\Bigg]
\;\le\; C_s^2 \sum_{k=1}^{K} \lambda_k^2 \cdot \mathbb{E}\!\Bigg[\frac{1}{|y_k|}\sum_{t} m_{k,t}\,V_{k,t}\Bigg].
\end{equation*}

\paragraph{Step 5: Bandwidth and duration corollaries.}
If the span mask covers at most $\alpha$ tokens per rollout (\method enforces $\alpha = 0.25$) and the masked average privileged variance is uniformly bounded $\bar V_\mathrm{span} := \mathbb{E}_y[|\mathcal{S}_y|^{-1} \sum_{t \in \mathcal{S}_y} V_{k,t}] \le \bar V$, then
\begin{equation*}
\mathcal{E}_K \;\le\; C_s^2 \alpha \bar V \sum_k \lambda_k^2.
\end{equation*}
If additionally $\lambda_k = 0$ for $k > t_\mathrm{start} + T_\mathrm{decay}$, then $\sum_k \lambda_k^2 \le \Lambda^2 < \infty$ regardless of total horizon, giving the finite-exposure rate $\mathcal{E}_K = O(\alpha \Lambda^2)$. \qed
\end{proof}

\begin{remark}[Why the second-moment bound, not first?]
We bound $\mathcal{E}_K = \sum_k \lambda_k^2 \mathbb{E}[\|\delta\|^2]$, i.e.,
the cumulative second moment, rather than the cumulative first moment
$\sum_k \lambda_k \mathbb{E}[\|\delta\|]$. The reason is that
\citet{yang2026rlsd} identify the second moment (variance) as the source of
SGD noise / gradient drift, since $\mathbb{E}_c[\delta] = 0$ by construction.
The first moment vanishes; only the variance can accumulate via path-
dependent SGD.
\end{remark}

\subsection{Proof of Proposition~\ref{prop:alignment}}
\label{app:proof-alignment}

\begin{proof}
We work under binary span weights $w_t \in \{0,1\}$ (the main-method assumption of \S\ref{ssec:span-annot}). Define the per-step span gradient under action profile $(\mu_E, \mu_K) \in \{0,1\}^2$ as
\begin{equation*}
g^\mathrm{sel}_k \;=\; \lambda_k \cdot \big( \mu_E\, \overline{g}^E_k + \mu_K\, \overline{g}^K_k \big),\quad
\overline{g}^E_k := \tfrac{1}{|\mathcal{E}_{y_k}|}\!\!\sum_{t \in \mathcal{E}_{y_k}}\!\! g_\mathrm{RKL}(t),\quad
\overline{g}^K_k := \tfrac{1}{|\mathcal{K}_{y_k}|}\!\!\sum_{t \in \mathcal{K}_{y_k}}\!\! g_\mathrm{FKL}(t),
\end{equation*}
i.e., per-set means of per-token RKL/FKL gradients matching Eq.~\ref{eq:loss}. Empty span sets contribute $0$ by the convention of \S\ref{ssec:loss}. For each selected error-span position $t \in \mathcal{E}_{y_k}$ introduce a Bernoulli precision indicator $Z_t^E \in \{0, 1\}$ with $\Pr[Z_t^E = 1] = q_E$; analogously $Z_t^K$ on $\mathcal{K}_{y_k}$ with $\Pr[Z_t^K = 1] = q_K$. By the alignment-margin assumption,
\begin{equation*}
\langle g_\mathrm{RKL}(t), \tilde g(t)\rangle \;\ge\; Z_t^E \cdot \gamma_E - (1 - Z_t^E)\cdot B_E,
\end{equation*}
and analogously for $g_\mathrm{FKL}(t)$. Taking expectation over $Z_t^E$:
$\mathbb{E}[\langle g_\mathrm{RKL}(t), \tilde g(t)\rangle] \ge q_E \gamma_E - (1-q_E) B_E$. Averaging over $t \in \mathcal{E}_{y_k}$ under per-set normalization gives $\mathbb{E}[\langle \overline{g}^E_k, \tilde g_k\rangle] \ge q_E \gamma_E - (1-q_E) B_E$ on rollouts with nonempty $\mathcal{E}_{y_k}$, and similarly for the $K$ branch. With binary masks, the per-token mean and the per-token expectation coincide; the soft-weight case requires a separate $\mathbb{E}[w_t \mid \text{selected}] \ge w_\mathrm{min} > 0$ assumption to retain the same lower bound, which is why we restrict to $w_t \in \{0,1\}$ here. Taking a final expectation over the rollout-level token-fraction weights $p_E := \mathbb{E}[|\mathcal{E}_{y_k}|/|y_k|]$ and $p_K := \mathbb{E}[|\mathcal{K}_{y_k}|/|y_k|]$ (the expected per-rollout normalized batch fractions of error-span and key-span selected tokens; rollouts with $R{=}0$ contribute to $p_E$, those with $R{=}1$ to $p_K$, and empty spans contribute zero), by linearity:
\begin{equation*}
\mathbb{E}\big[\langle g^\mathrm{sel}_k, \tilde g_k\rangle\big] \;\ge\; \lambda_k\,\Big[ \mu_E\, p_E \big(q_E \gamma_E - (1-q_E) B_E\big) + \mu_K\, p_K \big(q_K \gamma_K - (1-q_K) B_K\big) \Big].
\end{equation*}
The signal is positive whenever every active per-class bracket is positive.
\end{proof}

\begin{remark}[On the oracle direction]
$\tilde g$ is the unobservable verifier-grounded gradient. Prop.~\ref{prop:alignment} is therefore a \emph{conditional} signal-lower-bound: under the precision and alignment-margin assumptions, the selected KL direction has positive correlation with $\tilde g$. We use this as the structural complement of the exposure bound (Prop.~\ref{prop:exposure}) to articulate the ``useful guidance vs.\ leakage'' trade-off, not as a formal optimality theorem for the trained network.
\end{remark}

\subsection{Per-token damping under teacher coverage gap}
\label{app:proof-rlsd}

\begin{proposition}[Per-token damping]
\label{prop:rlsd}
Let $w_t = \pi_T(y_t|x,c,y_{<t}) / \pi_S(y_t|x,y_{<t})$ be RLSD's positive-advantage reweighting factor. If $\pi_T(y_{t^\star}|c) \le \delta$ and $\pi_S(y_{t^\star}) \ge p_0$ at some position $t^\star$, then $\hat A_{t^\star} := A \cdot w_{t^\star} \le A\delta/p_0$.
\end{proposition}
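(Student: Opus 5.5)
The bound follows from the definition of $w_{t^\star}$ together with monotonicity of the ratio in each argument. Everything at position $t^\star$ is a nonnegative scalar. The numerator $\pi_T(y_{t^\star}\mid x,c,y_{<t^\star})$ is at most $\delta$. The denominator $\pi_S(y_{t^\star}\mid x,y_{<t^\star})$ is at least $p_0>0$, which in particular makes $w_{t^\star}$ well defined. Dividing the first inequality by the second gives the weight bound
\[
0 \;\le\; w_{t^\star} \;=\; \frac{\pi_T(y_{t^\star}\mid x,c,y_{<t^\star})}{\pi_S(y_{t^\star}\mid x,y_{<t^\star})} \;\le\; \frac{\delta}{p_0}.
\]

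The next step is to multiply this by the advantage. The proposition concerns RLSD's \emph{positive-advantage} reweighting, so I work on the branch $A \ge 0$. There, multiplying the weight bound by $A$ preserves the inequality and yields $\hat A_{t^\star} = A\,w_{t^\star} \le A\delta/p_0$.

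The only point needing care is the sign of $A$. If $A<0$, multiplication flips the inequality, and the stated upper bound becomes a lower bound. I would handle this by stating explicitly that the reweighting, and hence the claim, applies only when $A>0$; the case $A=0$ is trivial. I would also note that since $w_{t^\star}\ge 0$, the bound reads $0\le \hat A_{t^\star}\le A\delta/p_0$. This two-sided form expresses the damping interpretation. When the teacher has a coverage gap ($\delta\ll p_0$), a correct but non-canonical token the student already produces confidently gets its positive credit shrunk by a factor of at least $p_0/\delta$.

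Finally, I would remark that no gradient or score-operator assumption (Eq.~\eqref{eq:score-op-bound}) is needed, because the bound is purely pointwise at $t^\star$.
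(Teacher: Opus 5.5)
Your proof is correct and matches the paper's argument: bound the ratio $w_{t^\star}$ by $\delta/p_0$ using the two hypotheses, then multiply through by the positive advantage $A$. Your explicit treatment of the sign of $A$ (the paper simply multiplies by $A>0$) and of $p_0>0$ for well-definedness is a harmless refinement.
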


The condition is local --- one correct token with low teacher probability suffices --- and is empirically common in math reasoning, where the privileged context is typically a single canonical solution and the student may sample valid alternatives. Independent of~\citet{yang2026rlsd}'s mutual-information ill-posedness; the two viewpoints are complementary.

\begin{proof}
By definition,
\begin{equation*}
w_{t^\star} \;=\; \frac{\pi_T(y_{t^\star} \mid x, c, y_{<t^\star})}{\pi_S(y_{t^\star} \mid x, y_{<t^\star})}
\;\le\; \frac{\delta}{p_0},
\end{equation*}
where the inequality applies hypotheses (1) and (2) of the proposition.
Multiplying both sides by $A > 0$:
$\hat A_{t^\star} = A \cdot w_{t^\star} \le A \cdot \delta / p_0$.
\end{proof}

\begin{remark}[On the unclipped form]
The proposition is stated for the \emph{unclipped} reweighting factor
$w_t = \pi_T / \pi_S$. The actual RLSD implementation clips
$w_t \in [1 - \epsilon_w, 1 + \epsilon_w]$ to bound per-token deviation.
Under clipping, the damping is bounded below by $1 - \epsilon_w$ rather than
$\delta / p_0$; the qualitative phenomenon — that $w_t$ \emph{still} damps
correct non-canonical tokens whenever $\pi_T(y_t) < \pi_S(y_t)$ — persists
but is bounded by the clip.
\end{remark}

\begin{remark}[Why this is not a global collapse theorem]
Prop.~\ref{prop:rlsd} is a per-token statement: at any single position
satisfying (1)+(2), the reweighting factor damps the gradient on that token.
We do \emph{not} prove that this damping aggregates to global collapse of the
training objective. Per-token damping accumulating across many such positions
is consistent with empirical reports of long-horizon RLSD instability, but we
treat the global behavior as an empirical phenomenon rather than a theorem.
\end{remark}

\subsection{Idealized corner allocation}
\label{app:proof-corner}

\begin{proposition}[Corner allocation, conditional on class-specific leakage regime]
\label{thm:corner}
Under (A1)--(A4) below, the optimal per-token allocation $\beta^*(t) \in [0, 1] \cup \{\emptyset\}$ under the utility $U_\beta(t; \theta) := \langle g_\beta(t), \tilde g(t)\rangle - \kappa\,\mathbf{1}[\beta \neq \emptyset]\,V_t$ satisfies: $\beta^*(t) = 0$ on $\mathbf{E}_t$ when $\kappa < \kappa_E$; $\beta^*(t) = 1$ on $\mathbf{K}_t$ when $\kappa < \kappa_K$; $\beta^*(t) = \emptyset$ on $\mathbf{N}_t$ when $\kappa > \kappa_N$. A unified three-valued allocation $(0, 1, \emptyset)$ requires $\kappa_N < \kappa < \min\{\kappa_E, \kappa_K\}$.
\end{proposition}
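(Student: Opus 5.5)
The plan is to reduce the allocation problem to a pointwise comparison on each token class. The key structural observation is that under the GKD-style mixture $g_\beta(t) := \beta\, g_\mathrm{FKL}(t) + (1-\beta)\, g_\mathrm{RKL}(t)$, the map $\beta \mapsto \langle g_\beta(t), \tilde g(t)\rangle$ is affine in $\beta$. The leakage penalty $\kappa\,\mathbf{1}[\beta \neq \emptyset] V_t$, by contrast, does not depend on which $\beta \in [0,1]$ is chosen. Hence on $[0,1]$ the utility $U_\beta(t;\theta)$ is affine, and its maximum is attained at an endpoint.

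I will read (A1)--(A4) as follows:
\begin{itemize}[leftmargin=1.1em,itemsep=1pt,topsep=2pt]
\item (A1) linear mixing, as above.
\item (A2) endpoint alignment. On $\mathbf{E}_t$, RKL is strictly better aligned, with $a_E := \langle g_\mathrm{RKL},\tilde g\rangle - \langle g_\mathrm{FKL},\tilde g\rangle > 0$. On $\mathbf{K}_t$ the reverse holds, with $a_K > 0$ for FKL over RKL.
\item (A3) density floor. The endpoint alignments on these classes are bounded below by the margins $\gamma_E, \gamma_K$ of Prop.~\ref{prop:alignment}, which Cor.~\ref{cor:fkl-underalloc} keeps non-vanishing. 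On $\mathbf{N}_t$, the best achievable alignment $\max_\beta \langle g_\beta,\tilde g\rangle$ is bounded above by some $G_N$.
\item (A4) the privileged variance $V_t$ is bounded above and below on each class.
\end{itemize}
If the paper's actual assumptions differ, I will adapt the constants but keep the same structure.

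The argument proceeds in three steps.

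\textbf{Step 1 (interior is dominated).} By affineness, $U_\beta = (1-\beta)U_0 + \beta U_1$. Under (A2), the slope is strictly negative on $\mathbf{E}_t$ and strictly positive on $\mathbf{K}_t$. So every interior $\beta \in (0,1)$ is strictly dominated, by $\beta=0$ on $\mathbf{E}_t$ and by $\beta=1$ on $\mathbf{K}_t$.

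\textbf{Step 2 (endpoint versus $\emptyset$).} Comparing the best endpoint with $U_\emptyset = 0$ gives thresholds:
\begin{itemize}[leftmargin=1.1em,itemsep=1pt,topsep=2pt]
\item On $\mathbf{E}_t$, RKL beats $\emptyset$ iff $\kappa < \langle g_\mathrm{RKL},\tilde g\rangle / V_t$. Define $\kappa_E := \inf_{t\in\mathbf{E}} \gamma_E / V_t$, which is positive by (A3)--(A4).
\item On $\mathbf{K}_t$, define $\kappa_K$ analogously with $\gamma_K$.
\item On $\mathbf{N}_t$, any KL action beats $\emptyset$ only if $\kappa < G_N / V_t$. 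Set $\kappa_N := \sup_{t\in\mathbf{N}} \max(G_N,0)/V_t$. Then $\kappa > \kappa_N$ forces $\beta^* = \emptyset$.
\end{itemize}

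\textbf{Step 3 (joint allocation).} The unified allocation $(0,1,\emptyset)$ requires all three per-class conditions to hold simultaneously. This is exactly the interval $\kappa_N < \kappa < \min\{\kappa_E,\kappa_K\}$, and the allocation is valid when that interval is nonempty.

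The main obstacle is ensuring the thresholds are uniform over $t$. Without (A3)--(A4), $\gamma/V_t$ can degenerate: $V_t \to 0$ sends $\kappa_E \to \infty$, which is harmless, but alignment tending to $0$ sends $\kappa_E \to 0$. The density floor is precisely what rules out the latter, via the non-vanishing FKL lift of Cor.~\ref{cor:fkl-underalloc}. A secondary subtlety is the convention that $\emptyset$ has zero alignment and zero penalty. I will state this explicitly, since it is what makes $\emptyset$ a genuine third corner rather than a limit point of $[0,1]$.
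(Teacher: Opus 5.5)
Your skeleton matches the paper's Steps 1--3 and its combining step. The steps are: affineness of $\beta\mapsto U_\beta$ on $[0,1]$ rules out the interior, each endpoint gets a break-even threshold against $\emptyset$, and the three per-class conditions are intersected. The gap is on $\mathbf{N}_t$. You build an alignment ceiling $G_N$ into your reading of (A3), but that ceiling is the one thing the paper's proof actually has to derive.

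In the paper, (A3) is a density floor $\min_v\min(\pi_\theta(v),\pi_T(v))\ge p_{\min}$ on $\mathbf{N}_t$. Although the statement cites only (A1)--(A4), the proof also uses (A5): $\|\pi_\theta-\pi_T\|_{\mathrm{TV}}\le\epsilon$ on $\mathbf{N}_t$. It then applies the score-operator bound \eqref{eq:score-op-bound} to the two zero-sum coefficient vectors of Lemma~\ref{lem:softmax}:
\begin{itemize}
\item the FKL vector $\pi_\theta-\pi_T$ gives $\|g_1\|=O(\epsilon)$;
\item the RKL vector $\pi_\theta(v)(r(v)-\bar r)$, with $|r(v)|=O(\epsilon/p_{\min})$ from the floor, gives $\|g_0\|=O(\epsilon/p_{\min})$.
\end{itemize}
Linearity and Cauchy--Schwarz then give $|\langle g_\beta,\tilde g\rangle|\le C_N\epsilon$ for every $\beta\in[0,1]$. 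Combined with (A4), $V_t\ge V_{\min}$, this yields $\kappa_N=C_N\epsilon/V_{\min}$. Without this step your $\kappa_N$ is just the break-even constant by definition, so ``$\beta^*=\emptyset$ for $\kappa>\kappa_N$'' becomes tautological. Nothing then indicates that $\kappa_N$ is small. The $O(\epsilon)$ scaling is what makes the interval $\kappa_N<\kappa<\min\{\kappa_E,\kappa_K\}$ plausibly nonempty for near-aligned non-span tokens.

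Your treatment of $\mathbf{E}_t$ and $\mathbf{K}_t$ also relies on hypotheses the paper does not have. The paper's (A2) is only a difference margin, $\langle g_0-g_1,\tilde g\rangle\ge\gamma_E$. This does not bound $\langle g_0,\tilde g\rangle$ from below, since both endpoints could be negatively aligned. So your uniform threshold $\kappa_E:=\inf_t\gamma_E/V_t$, and its claimed positivity, need two extra hypotheses: an absolute alignment floor and an upper bound on $V_t$ over $\mathbf{E}_t$.

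Your appeal to Cor.~\ref{cor:fkl-underalloc} does not supply the alignment floor. That corollary concerns logit-space pressure, not alignment with $\tilde g$, and in the paper the density floor plays no role on these classes. The paper avoids the issue by keeping the thresholds per token, $\kappa_E:=\langle g_0-g_\emptyset,\tilde g\rangle/V_t$ and $\kappa_K:=\langle g_1-g_\emptyset,\tilde g\rangle/V_t$, and explicitly calls them instance-dependent. Either adopt that, or state your extra uniformity hypotheses explicitly.
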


\begin{figure}[!t]
  \centering
  \includegraphics[width=\linewidth]{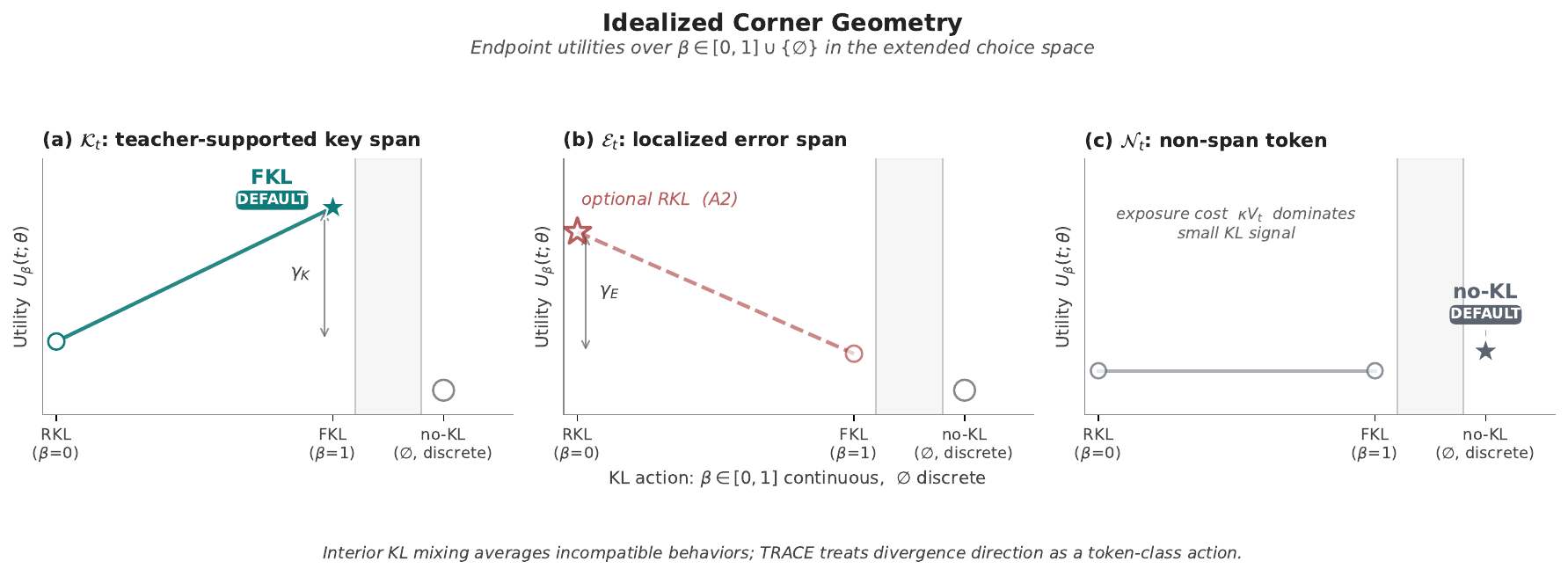}
  \caption{\textbf{Idealized corner geometry} for the extended choice space $\beta\in[0,1]\cup\{\emptyset\}$. Under the conditional regime $\kappa_N < \kappa < \min\{\kappa_E, \kappa_K\}$, endpoint utilities motivate FKL on key spans (a), optional RKL on localized error spans (b), and no-KL on non-spans (c); $\gamma_K, \gamma_E$ are the alignment margins from Prop.~\ref{prop:alignment}. The figure illustrates utility geometry rather than a deployed-network optimality guarantee. RKL on $\mathcal{E}_t$ is an optional routed action under high-precision error localization; in our strong-base math regime, the empirical default disables this action.}
  \label{fig:corner_geom}
\end{figure}

For convenience we list the assumptions used in this proof:
\begin{itemize}[topsep=2pt, itemsep=0pt]
\item \textbf{(A1) Three-class decomposition.} Every token belongs to exactly one of $\mathbf{E}_t / \mathbf{K}_t / \mathbf{N}_t$ as defined in \S\ref{ssec:thy-alignment}.
\item \textbf{(A2) Endpoint-alignment margins.} On $\mathbf{E}_t$, $\langle g_0 - g_1, \tilde g\rangle \ge \gamma_E > 0$; on $\mathbf{K}_t$, $\langle g_1 - g_0, \tilde g\rangle \ge \gamma_K > 0$.
\item \textbf{(A3) Density floor on the supported vocabulary.} On $\mathbf{N}_t$, after top-$K$ truncation and probability flooring (the standard implementation route), $\min_v \min(\pi_\theta(v), \pi_T(v)) \ge p_\mathrm{min} > 0$ on the shared support. We take this as an explicit assumption for the idealized corner statement; it does not follow automatically from top-$K$ alone.
\item \textbf{(A4) Leakage-cost lower bound.} On $\mathbf{N}_t$, $V_t \ge V_\mathrm{min} > 0$.
\item \textbf{(A5) Total-variation closeness on $\mathbf{N}_t$.} Non-span tokens are precisely those for which student and teacher distributions are close, $\|\pi_\theta(\cdot \mid \cdot) - \pi_T(\cdot \mid \cdot, c)\|_\mathrm{TV} \le \epsilon$ for some $\epsilon > 0$ (this is implicit in the definition of $\mathbf{N}_t$ as ``aligned'' tokens, but we state it explicitly here since the gradient-magnitude argument in Step~4 relies on it).
\end{itemize}
The GKD-inspired forward / reverse divergence family is
\begin{equation}
\mathcal{L}_\beta(t) := \beta \cdot \mathrm{KL}_F(\pi_T \| \pi_\theta)(t) + (1-\beta) \cdot \mathrm{KL}_R(\pi_T \| \pi_\theta)(t),
\qquad \beta \in [0, 1],
\label{eq:beta-gkd}
\end{equation}
extended with $\beta = \emptyset$ denoting pure GRPO (no KL).

\paragraph{Step 1: Linearity of $g_\beta$ and $U_\beta$ on $[0, 1]$.}
By Eq.~\eqref{eq:beta-gkd},
Linearity of $\nabla_\theta$ gives
$g_\beta(t; \theta) = \beta \cdot g_1(t; \theta) + (1 - \beta) \cdot g_0(t; \theta)$
on $\beta \in [0, 1]$. Therefore
$\langle g_\beta, \tilde g\rangle = (1 - \beta) \langle g_0, \tilde g\rangle + \beta \langle g_1, \tilde g\rangle$
is affine in $\beta$, and the leakage term
$\kappa \cdot \mathbf{1}[\beta \neq \emptyset] \cdot V_t$ is the constant
$\kappa V_t$ on the closed interval $[0, 1]$. Hence $U_\beta(t; \theta)$
is affine on $[0, 1]$ and attains its supremum at an endpoint, with the
interior strictly suboptimal except on the measure-zero event where the
slope $\langle g_1 - g_0, \tilde g\rangle$ vanishes.

\paragraph{Step 2: Class $\mathbf{E}_t$ — RKL endpoint selected by (A2).}
On $\mathbf{E}_t$, assumption (A2) directly gives
$\langle g_0 - g_1, \tilde g\rangle \ge \gamma_E > 0$, so
$U_0 - U_1 \ge \gamma_E$ (the leakage term $\kappa V_t$ cancels between the
two KL endpoints). Combining with Step 1, $\beta^*(t) = 0$ when restricted
to $[0, 1]$. To show $\beta = 0$ also dominates $\beta = \emptyset$, we
require $U_0 > U_\emptyset$, i.e.,
$\langle g_0 - g_\emptyset, \tilde g\rangle > \kappa V_t$. This holds for
any $\kappa < \kappa_0^{(E)} := \langle g_0 - g_\emptyset, \tilde g\rangle / V_t$
on $\mathbf{E}_t$. Hence $\beta^*(t) = 0$ for $\kappa < \kappa_0^{(E)}$.
The structural intuition (not part of the proof) is that
$g_0$ pressures down the confident-error token $v^-$ proportionally to
$\pi_\theta(v^-) \cdot (r(v^-) - \bar r)$ (Lemma~\ref{lem:softmax}), while
$g_1$ acts uniformly on every disagreement and may include teacher-noisy
alternatives whose verifier alignment is unknown.

\paragraph{Step 3: Class $\mathbf{K}_t$ — FKL endpoint selected by (A2).}
Symmetrically, (A2) gives $\langle g_1 - g_0, \tilde g\rangle \ge \gamma_K > 0$,
so $U_1 - U_0 \ge \gamma_K$. Step 1 then yields $\beta^*(t) = 1$ on
$[0, 1]$. As before, $\beta = 1$ dominates $\beta = \emptyset$ when
$\kappa < \kappa_0^{(K)} := \langle g_1 - g_\emptyset, \tilde g\rangle / V_t$.
Hence $\beta^*(t) = 1$ for $\kappa < \kappa_0^{(K)}$.
The structural intuition is that the forward-KL gradient at the
teacher-supported correct token $v^+$ scales linearly with the gap
$|\pi_T(v^+) - \pi_\theta(v^+)|$ regardless of the student's mass on
$v^+$, while the reverse-KL gradient at the same token scales as
$\pi_\theta(v^+)$ which can be small in the under-allocation regime.

\paragraph{Step 4: Class $\mathbf{N}_t$ — no-KL dominates.}
On $\mathbf{N}_t$, $\|\pi_\theta - \pi_T\|_\mathrm{TV} \le \epsilon$. We need
an upper bound on KL in terms of TV. Pinsker's inequality goes in the
opposite direction (KL $\ge$ TV$^2$), so we use the standard
\emph{bounded-density reverse Pinsker}: for two distributions $p, q$ on a
finite space with $\min_v p(v) \ge p_\mathrm{min} > 0$,
\begin{equation*}
\mathrm{KL}(p \| q) \;\le\; \chi^2(p, q) \;\le\; \frac{\|p - q\|_1^2}{p_\mathrm{min}}
\;\le\; \frac{(2\,\|p - q\|_\mathrm{TV})^2}{p_\mathrm{min}}
\;=\; \frac{4 \epsilon^2}{p_\mathrm{min}} \;=\; O(\epsilon^2).
\end{equation*}
By symmetry the same $O(\epsilon^2)$ bound holds for $\mathrm{KL}(q \| p)$
under $\min_v q(v) \ge p_\mathrm{min}$. We assume top-$K$ truncation enforces
this density floor on the supported vocabulary, which is the standard
implementation choice for distillation losses.

The KL gradient magnitudes inherit the same scaling, by direct application
of the score-operator bound~\eqref{eq:score-op-bound} to each KL coefficient
vector. For FKL the coefficient at vocabulary entry $v$ is
$\pi_\theta(v) - \pi_T(v)$, so $\|\nabla_\theta \mathrm{KL}_F\|^2 \le C_s^2 \sum_v (\pi_\theta(v) - \pi_T(v))^2 = O(\epsilon^2)$. For RKL the coefficient is
$\pi_\theta(v)(\log[\pi_\theta(v)/\pi_T(v)] - \bar r)$; under the density floor
$p_\mathrm{min}$ on the supported vocabulary, $|\log[\pi_\theta(v)/\pi_T(v)]| = O(\epsilon / p_\mathrm{min})$,
so $\|\nabla_\theta \mathrm{KL}_R\|^2 \le C_s^2 \cdot O(\epsilon^2 / p_\mathrm{min}^2)$.
Both gradient magnitudes are $O(\epsilon)$ on $\mathbf{N}_t$, so
$|\langle g_\beta, \tilde g\rangle| \le \|g_\beta\| \cdot \|\tilde g\| \le C_N \cdot \epsilon$
for any $\beta \in [0, 1]$, where $C_N$ depends only on $C_s$, $\|\tilde g\|$,
and the density floor $p_\mathrm{min}$ (A3). The density floor on the supported
vocabulary is taken as an explicit assumption for this idealized corner statement
(top-$K$ renormalization plus probability flooring is the standard
implementation route). By (A4), the leakage cost
satisfies $\kappa V_t \ge \kappa V_\mathrm{min}$. The condition
$U_\emptyset(t) > U_\beta(t)$ for every $\beta \in [0, 1]$ is therefore
implied by $\kappa V_\mathrm{min} > C_N \epsilon$, i.e.,
$\kappa > \kappa_0^{(N)} := C_N \epsilon / V_\mathrm{min}$ \emph{or}
$\epsilon < \epsilon_0 := \kappa V_\mathrm{min} / C_N$. Either form fixes
the regime in which $\beta^*(t) = \emptyset$ on $\mathbf{N}_t$.

\paragraph{Combining the three classes.}
Steps 2--3 give $\beta^*(t) = 0$ on $\mathbf{E}_t$ when $\kappa < \kappa_E := \langle g_0 - g_\emptyset, \tilde g\rangle / V_t$, and $\beta^*(t) = 1$ on $\mathbf{K}_t$ when $\kappa < \kappa_K := \langle g_1 - g_\emptyset, \tilde g\rangle / V_t$. Step 4 gives $\beta^*(t) = \emptyset$ on $\mathbf{N}_t$ when $\kappa > \kappa_N := C_N \epsilon / V_\mathrm{min}$. The unified three-class allocation $(0, 1, \emptyset)$ on $(\mathbf{E}_t, \mathbf{K}_t, \mathbf{N}_t)$ therefore holds whenever
\begin{equation*}
\kappa_N \;<\; \kappa \;<\; \min\{\kappa_E, \kappa_K\}.
\end{equation*}
This interval is non-empty only under the separation condition $\kappa_N < \min\{\kappa_E, \kappa_K\}$. We emphasize that $\kappa_E, \kappa_K, \kappa_N$ are \emph{not} absolute instance-independent constants: they are determined by the alignment margins, density-floor parameters, and approximation $\epsilon$ of the assumed class structure. \qed

\begin{remark}[Scope]
The proof is conditional on (A1)--(A4). It does \emph{not} establish
optimality for the trained network: in practice the annotator only
approximates the three-class membership, the alignment direction
$\tilde g$ is unobservable, and the density floor depends on the top-$K$
truncation choice. The proposition is therefore an \emph{idealized
interpretation} that organizes Lemmas~\ref{lem:softmax}--\ref{lem:natural},
Prop.~\ref{prop:rlsd}, and Prop.~\ref{prop:deadzone} into a single
optimization principle, not a formal optimality statement for the deployed
training loop.
\end{remark}

\subsection{Signal preservation in GRPO dead zones}
\label{app:proof-deadzone}

\begin{proposition}[Dead-zone signal preservation]
\label{prop:deadzone}
On a rollout group with $R(x, y^{(i)}) = 1$ for all $i$, the GRPO advantage collapses to $\hat A^{(i)} \equiv 0$ on every token. During the KL-active phase ($\lambda_k > 0$), the \method gradient on $y^{(i)}$ reduces to $\lambda_k \cdot \nabla_\theta \sum_{t \in \mathcal{K}_{y^{(i)}}} w_t \cdot \mathrm{KL}_F(\stopgrad(\pi_T)\,\|\,\pi_\theta)$, non-zero whenever $\pi_T \ne \pi_\theta$ on some $t \in \mathcal{K}_{y^{(i)}}$. After decay ($\lambda_k = 0$), the method intentionally returns to GRPO and the dead zone applies.
\end{proposition}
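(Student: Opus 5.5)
The argument has three parts, taken in order. First, the GRPO terms vanish on an all-correct group. Second, what survives is exactly the routed FKL term. Third, that term is nonzero unless the teacher and student already agree on every key position.

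\textbf{Step 1: GRPO collapse.} On a group with $R(x,y^{(i)})=1$ for all $i$, the group mean is $\mu_G=1$. Hence $R^{(i)}-\mu_G=0$ for every $i$. With the standard $\epsilon$-regularised denominator, $A^{(i)}=0/(\sigma_G+\epsilon)=0$. Because GRPO assigns every token the sequence-level advantage, $\hat A^{(i)}_t\equiv 0$ on every token.

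The clipped surrogate is linear in the advantage, so it has zero gradient for any ratio. This holds in particular with the DAPO clip-higher variant. So both $\nabla\mathcal{L}^{\mathcal{N}_y}_{\mathrm{GRPO}}$ and $\rho_k\nabla\mathcal{L}^{\mathcal{S}_y}_{\mathrm{GRPO}}$ vanish, whatever the value of $\rho_k$.

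\textbf{Step 2: identify the surviving term.} Every rollout in the group is correct, so $\mathcal{E}_{y^{(i)}}=\emptyset$ by the annotation convention. The RKL branch therefore contributes nothing, whether $\mu_E$ is $0$ or $1$. Under the default $\mu_K=1$, Eq.~\eqref{eq:loss} leaves
\[
\frac{\lambda_k}{|y^{(i)}|}\sum_{t\in\mathcal{K}_{y^{(i)}}}\mathrm{KL}(\stopgrad\,\pi_T\,\|\,\pi_\theta)_t .
\]
I identify $w_t$ with the normalisation weight $1/|y^{(i)}|$, or with the span-mean times the $|\mathcal{S}_y|/|\hat y|$ multiplier, which is the same thing. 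Since masks and teacher logits are stop-gradient (Prop.~\ref{prop:decomp}), the gradient passes inside the sum, and the reduced form in the statement follows.

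\textbf{Step 3: non-vanishing.} This is the step that needs care. By Lemma~\ref{lem:softmax}, the logit gradient at position $t$ is $\pi_\theta(\cdot)-\pi_T(\cdot)$. This vector is nonzero whenever $\pi_T\neq\pi_\theta$ at $t$.

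Nonzero logit gradients at a single $t$ could in principle cancel across positions once pulled back through $\nabla_\theta\ell$. I would handle this in two ways. The primary route is to state non-vanishing at the level of the per-position logits, or equivalently for the loss gradient with respect to the output logits at each position. These logits are independent coordinates for distinct positions, so the gradient with respect to them is nonzero as soon as one coordinate block is nonzero. As a backup, I would argue at the level of the loss itself: $\sum_t w_t\,\mathrm{KL}_F$ is strictly positive and attains its minimum $0$ only at $\pi_\theta=\pi_T$. So $\theta$ is not a stationary point of this term, as long as the parametrisation can locally realise logit perturbations, which is a mild surjectivity assumption.

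The last clause of the statement is immediate. When $\lambda_k=0$, the KL term is multiplied by zero and $\rho_k=1$. The loss is then pure GRPO, and Step 1 gives a zero gradient on this group.
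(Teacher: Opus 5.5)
Your proposal is correct and follows the same route as the paper's proof. Zero advantage on an all-correct group kills every GRPO term, correct rollouts carry only key spans so only the FKL branch survives, and Lemma~\ref{lem:softmax} gives the logit gradient $\pi_\theta - \pi_T$, which is nonzero off agreement. You also tighten several points the paper passes over with the phrase ``generic condition'': you handle the $\rho_k$ span-GRPO term and the clipped surrogate explicitly, prove the post-decay clause, and separate logit-level from parameter-level non-vanishing via a local surjectivity assumption.
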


\begin{proof}
\textbf{Part 1.} Let $G$ be a rollout group with $R(x, y^{(i)}) = 1$ for all
$i \in \{1, \ldots, |G|\}$. The group mean is $\mu_G = 1$ and the group
standard deviation is $\sigma_G = 0$. Under the convention $0/0 = 0$ used in
practical GRPO implementations, the per-rollout advantage
$A^{(i)} = (R^{(i)} - \mu_G) / \sigma_G = 0/0 = 0$ for every $i$. Token-level
advantage $A^{(i)}_t = A^{(i)} = 0$ for every $t$, hence the GRPO loss
$- \sum_t A^{(i)}_t \log \pi_\theta(y_t^{(i)})$ has zero gradient.

\textbf{Part 2.} By Prop.~\ref{prop:decomp}, the \method gradient on
$y^{(i)}$ decomposes additively into a span piece on $\mathcal{S}_{y^{(i)}}$
and a non-span piece on $\mathcal{N}_{y^{(i)}}$. Part 1 zeroes the non-span
GRPO piece on every token. On a correct rollout
($R(x, y^{(i)}) = 1$), the annotator places spans only in $\mathcal{K}_{y^{(i)}}$
(no error spans), so the span piece is purely a forward-KL term:
$\sum_{t \in \mathcal{K}_{y^{(i)}}} w_t \cdot \nabla_\theta \mathrm{KL}_F(\stopgrad(\pi_T) \| \pi_\theta)$.
By Lemma~\ref{lem:softmax} (Eq.~\eqref{eq:kl-grads}), this gradient at logit
$\ell_v$ is $\pi_\theta(v) - \pi_T(v)$ for each vocabulary entry $v$, summed
over $t \in \mathcal{K}_{y^{(i)}}$. The result is non-zero whenever there
exists a position $t \in \mathcal{K}_{y^{(i)}}$ and a token $v$ with
$\pi_\theta(v \mid x, y_{<t}) \neq \pi_T(v \mid x, c, y_{<t})$ — a generic
condition satisfied except on the measure-zero event of perfect
distributional agreement.
\end{proof}

\section{Experimental Details}
\label{app:hparams}

\subsection{Training Hyperparameters}
\label{app:hparams-train}

\paragraph{Common settings.}
All methods share: Qwen3-8B base, OpenThoughts-114k math subset (up to 30K problem--solution pairs), H100 GPUs, AdamW optimizer with weight decay $0.01$ and gradient-clip norm $1.0$, max prompt length $2048$, vLLM rollout at $T = 0.6$, train batch size $32$ with $G = 8$ rollouts per problem, mini-batch size $32$, learning rate $1{\times}10^{-5}$ with $10$-step linear warm-up, and $300$ total training steps. Max response length is $32768$ for Think-student training and $8192$ for NoThink-student ablations. Method-specific deltas are listed in Tab.~\ref{tab:hp-baselines} (baselines) and Tab.~\ref{tab:hp-trace} (\method); best-checkpoint indices used for Tab.~\ref{tab:main} are \method step $250$, GRPO step $250$, SDPO step $35$, SRPO step $65$ (collapse onset).

\begin{table}[h]
\centering\small
\setlength{\tabcolsep}{4pt}
\caption{Method-specific hyperparameters for the four baselines. ``---'' marks parameters not applicable to the method. All baselines use AdamW with the common settings above and learning rate $1{\times}10^{-5}$.}
\label{tab:hp-baselines}
\resizebox{\linewidth}{!}{%
\begin{tabular}{lllll}
\toprule
& GRPO & SDPO~\citep{huebotter2026sdpo} & SRPO~\citep{xx2026srpo} & RLSD~\citep{yang2026rlsd} \\
\midrule
Distillation divergence & --- & JSD ($\alpha{=}0.5$) & FKL & --- (no KL grad) \\
Distillation top-$K$    & --- & 100 & 100 & --- \\
Teacher / EMA           & --- & frozen (EMA $0$) & EMA $0.05$ & sync $N{=}10$ \\
$\lambda$ schedule      & --- & constant & constant & $1.0{\to}0$ over $50$ steps \\
Other                   & DAPO $\varepsilon{=}0.2/0.28$, IS-clip $2$ & correct-sibling rollouts & entropy suppression $\beta{=}1$ & log-ratio multiplier on GRPO advantage \\
\bottomrule
\end{tabular}%
}
\end{table}

\begin{table}[h]
\centering\small
\caption{\method-specific hyperparameters (default $(\mu_E, \mu_K) = (0, 1)$, FKL on $\mathcal{K}_y$ only).}
\label{tab:hp-trace}
\begin{tabular}{lll}
\toprule
\textbf{Category} & \textbf{Parameter} & \textbf{Value} \\
\midrule
Loss       & Distillation divergence (default)  & FKL on $\mathcal{K}_y$ \\
           & Distillation top-$K$               & 100 \\
           & Per-vocab KL clip ($\tau$)         & 0.05 \\
           & Coverage cap ($\alpha$)            & 0.25 \\
           & DAPO $\varepsilon_\mathrm{low} / \varepsilon_\mathrm{high}$ & $0.2 / 0.28$ \\
\midrule
Schedule   & KL warm-up start ($t_\mathrm{start}$) & 10 \\
           & KL decay window ($T_\mathrm{decay}$)  & 30 \\
           & Initial KL weight ($w_0$)             & 0.5 \\
           & Teacher sync interval ($N$)           & 10 \\
\midrule
Routing    & Default $(\mu_E, \mu_K)$              & $(0, 1)$ \\
           & Thinking mode                         & symmetric (student / teacher both Think) \\
           & Annotator                             & \texttt{qwen3.5-plus}, thinking disabled \\
\bottomrule
\end{tabular}
\end{table}

\FloatBarrier
\subsection{Algorithm}
\label{app:algorithm}

Algorithm~\ref{alg:trace} gives the per-mini-batch update loop. The annotator $\pi_A$ runs once per rollout to produce span masks and a coarse type label, the teacher $\pi_T$ then forwards in thinking mode conditioned on $(x, c^{(i)})$, and the routed mixed-KL loss is added to the GRPO loss with weight $\lambda_k$ during the warm-up / decay window; after step $t_\mathrm{start} + T_\mathrm{decay}$ the method reduces to pure GRPO.

\begin{algorithm}[H]
\caption{\method (one mini-batch update)}
\label{alg:trace}
\begin{algorithmic}[1]
\Require policy $\pi_\theta$, teacher $\pi_T$, annotator $\pi_A$, current step $k$
\State Sample $G$ rollouts $\{\hat y^{(i)}\}_{i=1}^G \sim \pi_S(\cdot|x)$; compute $R^{(i)}$ and GRPO advantages $\hat A^{(i)}$
\For{each rollout $\hat y^{(i)}$}
  \State $\mathcal{E}_{\hat y^{(i)}}$ or $\mathcal{K}_{\hat y^{(i)}}$, type label $\tau^{(i)} \gets \pi_A(x, \hat y^{(i)}, R^{(i)})$
  \State Project character spans to token mask $m^{(i)}$, cap at $\alpha = 0.25$
\EndFor
\State Forward $\pi_T$ in thinking mode on $(x, c^{(i)}, \hat y^{(i)})$ with $c^{(i)} = \tau^{(i)}$ as private diagnostic prefix
\State Forward $\pi_S$ in thinking mode on $\hat y^{(i)}$ (symmetric Think configuration; asymmetric NoThink-student is reported in the asymmetric-thinking ablation, App.~\ref{app:a9-asym})
\State Compute mixed-KL loss per Eq.~\eqref{eq:loss} weighted by $\lambda_k$
\State Compute GRPO loss on non-span tokens, plus $\rho_k \cdot$ GRPO on span tokens (where $\rho_k = (1 - \lambda_k / w_0)$ smoothly reintroduces GRPO during decay)
\State Update $\theta$ with $\mathcal{L}_\mathrm{total}^{(k)} = \mathcal{L}_\mathrm{GRPO}^{(\mathcal{N})} + \rho_k \mathcal{L}_\mathrm{GRPO}^{(\mathcal{S})} + \lambda_k\,[\,\mathcal{L}_\mathrm{RKL}^{(\mathcal{E})} + \eta \mathcal{L}_\mathrm{FKL}^{(\mathcal{K})}\,]$
\If{$k \mod N == 0$} \State $\theta_T \gets \theta$ \Comment{periodic teacher sync}
\EndIf
\If{$\lambda_k = 0$ \textbf{and} $\rho_k = 1$} \State Skip teacher forward; method reduces to pure GRPO. \EndIf
\end{algorithmic}
\end{algorithm}

\clearpage
\subsection{Annotator and Teacher Prompts}
\label{app:prompts}

The privileged annotator $\pi_A$ (default: \texttt{qwen3.5-plus} API) receives a (problem, rollout, verifier-outcome) triple and returns a JSON span specification. Annotator thinking mode is disabled; teacher thinking mode is enabled. The four prompt templates below are reproduced verbatim from the released code.

\begin{promptblock}{Annotator $\pi_A$ \,---\, wrong-rollout error-span prompt}{promptInstrTitle}{promptInstrBg}
You are a mathematics error locator. The student's solution is shown as numbered segments \sk{[0]}, \sk{[1]}, \sk{[2]}, $\dots$\, Identify the segments that contain the root-cause reasoning errors.

\textbf{Output} (strict JSON):\\
\sk{\{ error\_spans: [\,\{ segment\_ids: [...], error\_type \}\,] \}}

\sk{error\_type} $\in$ \{\,\sk{missed\_case}, \sk{illegal\_step}, \sk{wrong\_constraint}, \sk{wrong\_equivalence}, \sk{premature\_conclusion}, \sk{format\_error}, \sk{arithmetic\_slip}, \sk{sign\_error}, \sk{off\_by\_one}\,\}.

\textbf{Rules.}
Each span is 1--3 consecutive \sk{segment\_ids}. Identify 0--3 spans, focusing on root-cause errors rather than downstream propagation of an upstream mistake. For omissions, select the segment immediately before the gap. Return an empty list if no segment can be confidently identified as root-causal --- do not guess. Output only the JSON; do not explain, rewrite, or solve the problem.
\end{promptblock}

\begin{promptblock}{Annotator $\pi_A$ \,---\, correct-rollout key-span prompt}{promptKeyTitle}{promptKeyBg}
You are a mathematics reasoning analyst. The student's solution is shown as numbered segments \sk{[0]}, \sk{[1]}, \sk{[2]}, $\dots$\, Identify the segments that carry the decisive reasoning steps --- the critical decisions or insights that make this solution work.

\textbf{Output} (strict JSON):\\
\sk{\{ key\_spans: [\,\{ segment\_ids: [...], step\_type \}\,] \}}

\sk{step\_type} $\in$ \{\,\sk{case\_split}, \sk{boundary\_check}, \sk{invariant}, \sk{substitution}, \sk{constraint\_use}, \sk{symmetry}, \sk{construction\_step}, \sk{final\_verification}, \sk{key\_formula}, \sk{insight}\,\}.

\textbf{Rules.}
Each span is 1--3 consecutive \sk{segment\_ids}. Identify 0--3 spans, skipping routine algebra, restatements, and final-answer formatting. Return an empty list if no segment is confidently critical --- do not pad with low-importance steps. Output only the JSON; do not explain or solve the problem.
\end{promptblock}

\begin{promptblock}{Teacher $\pi_T$ \,---\, wrong-rollout diagnostic prefix}{promptRubricTitle}{promptRubricBg}
\sk{\{ problem \}}

\textit{[Private diagnostic context. Do not mention this context in the response.]}

A sampled solution to this problem contains a local reasoning issue of the following type:\\[2pt]
\hspace*{1.5em}\sk{-- \{ error\_descriptions \}}

When solving, be especially careful around this type of issue. Do not reveal, quote, or locate the diagnostic context. Solve the problem independently and end with a boxed answer.
\end{promptblock}

\begin{promptblock}{Teacher $\pi_T$ \,---\, correct-rollout diagnostic prefix}{promptInputTitle}{promptInputBg}
\sk{\{ problem \}}

\textit{[Private diagnostic context. Do not mention this context in the response.]}

A successful solution to this problem relies on reasoning steps of the following type:\\[2pt]
\hspace*{1.5em}\sk{-- \{ step\_descriptions \}}

When solving, make the corresponding reasoning explicit enough to support the answer, but do not use checklist language or refer to this context. End with a boxed answer.
\end{promptblock}

The teacher therefore receives only the original problem $x$ plus a coarse type label; no rollout text, span locations, or span content are passed in. As in standard on-policy distillation it then evaluates token logits causally on the sampled prefix $\hat y_{<t}$.

\subsection{Span-to-Token Alignment}
Given character-level spans returned by $\pi_A$ over the response text, we project to a token boolean mask $m \in \{0, 1\}^{|\hat y|}$ as follows: for each token $y_t$ we compute its character interval $[c_t^-, c_t^+)$ by greedy alignment of single-token decodes against the original response (avoiding decode--re-encode round-trip drift); a token is marked if its interval intersects any span. We then enforce $|\{t : m_t = 1\}| \le \alpha |\hat y|$ with $\alpha = 0.25$ by keeping the top-weight tokens. Pseudocode is in our released code.

\subsection{Decoding Configurations}
We use two decoding protocols: \textbf{(i)~main}, the Qwen3 official thinking-mode protocol --- $T{=}0.6$, $p{=}0.95$, $k{=}20$, $\min p{=}0$, max\_new\_tokens$=38912$ --- used for all main results, the annotator-quality and asymmetric-thinking ablations, and OOD evaluation; and \textbf{(ii)~nonthinking}, max\_new\_tokens$=8192$ with thinking mode disabled, used only for the NoThink-eval ablation (Tab.~\ref{tab:a10-nothink}).

\section{Additional Results}
\label{app:full-tables}

\subsection{Extended Related Work and Paradigm Comparison}
\label{app:paradigms}

\paragraph{On-policy distillation lineage.}
MiniLLM~\citep{hinton2015distilling,kim2016sequence,sanh2019distilbert,ross2011dagger,gu2024minillm} and GKD~\citep{wen2023fdivergence,ko2024distillm,agarwal2024gkd} establish on-policy KD with reverse / forward / JSD-mixed divergences and document the reward-hacking failure mode of the unstabilized recipe. SDPO~\citep{huebotter2026sdpo} and follow-ups~\citep{zhao2026opsd,shenfeld2026sdft,penaloza2026piDistill,lu2025tmlopd} extend the recipe to large reasoning models with privileged-context teachers.

\paragraph{Long-horizon collapse and concurrent mitigations.}
RLHF studies connect alignment training to generalization and diversity shifts~\citep{kirk2024rlhf}; \citet{kim2026why} attribute self-distillation collapse to suppression of epistemic verbalization; \citet{yang2026rlsd} formalize an information-asymmetric ill-posedness with an irreducible mutual-information gap and propose RLSD which uses the teacher's log-probability ratio as a magnitude multiplier within GRPO; \citet{xx2026srpo} propose SRPO with sample-level routing. \method departs from the GKD line by treating divergence direction as a per-token-class action (\S\ref{sec:method}), and Prop.~\ref{thm:corner} (App.~\ref{app:proof-corner}) gives the idealized corner geometry that supports this routing.

\paragraph{Token-importance, process supervision, and rubrics.}
\citet{xx2026rethinkingopd} find ${\sim}97\%$ of mass concentrates on shared student-teacher tokens, and token / step-level credit-assignment works~\citep{kazemnejad2025vineppo,lin2025critical,xx2026tip} retain or upweight only sparse critical positions without performance loss. Process reward models~\citep{lightman2024prm,wang2024mathshepherd} and recent advantage-shaping works~\citep{wang2025entropy,xx2025ktae} provide token-level signals at the cost of training auxiliary networks. Rubric-as-signal methods~\citep{xx2026rubricarm} train a query-specific rubric generator using a single rubric per query shared across all responses; \method instead uses a \emph{rollout-specific} diagnostic prefix for KL-loss \emph{localization} rather than scalar reward, requiring no extra reward model and no per-query rubric training.

\paragraph{Paradigm-axis comparison.}
Tab.~\ref{tab:paradigms} positions \method against related paradigms by signal type, teacher type, extra information, and KL support.

\begin{table}[h]
\centering
\footnotesize
\setlength{\tabcolsep}{4pt}
\renewcommand{\arraystretch}{1.06}
\caption{\textbf{Comparison of post-training paradigms.}
\textbf{Bold} marks \method's distinctive choices.}
\label{tab:paradigms}
\begin{tabular*}{\linewidth}{@{\extracolsep{\fill}}lllll@{}}
\toprule
Method & Signal & Teacher & Extra info & KL support \\
\midrule
SFT & labels & none & gold response & demo tokens \\
GRPO & verifier $R$ & none & none & no teacher KL \\
OPD/GKD & logits & external / frozen & info-symmetric & all tokens \\
SDPO & logits & self, EMA & feedback + sibling & all tokens \\
SRPO & $R$ + logits & self, EMA & feedback + sibling & branch-weighted \\
RLSD & $R \times \tilde w_t$ & self, sync $N{=}10$ & verified trace $r$ & all tokens \\
\rowcolor{traceRow}
\textbf{\method} & \textbf{$R$ + span KL} & \textbf{self, sync $N{=}10$} & \textbf{coarse type only} & \textbf{critical spans} \\
\bottomrule
\end{tabular*}
\end{table}

\FloatBarrier

\subsection{Token-level Credit Assignment Case Study}
\label{app:case-study}

Figure~\ref{fig:case-study} visualizes token-level credit on a Diophantine case study ($6^a = 1 + 2^b + 3^c$) for one incorrect and one correct rollout. Credit profiles are \emph{measured} per-token update magnitudes for RLSD, SDPO, and \method (one optimization step on the same rollout, same seed); the right panels report concentration inside annotator spans versus outside.

\begin{center}
\includegraphics[width=\linewidth]{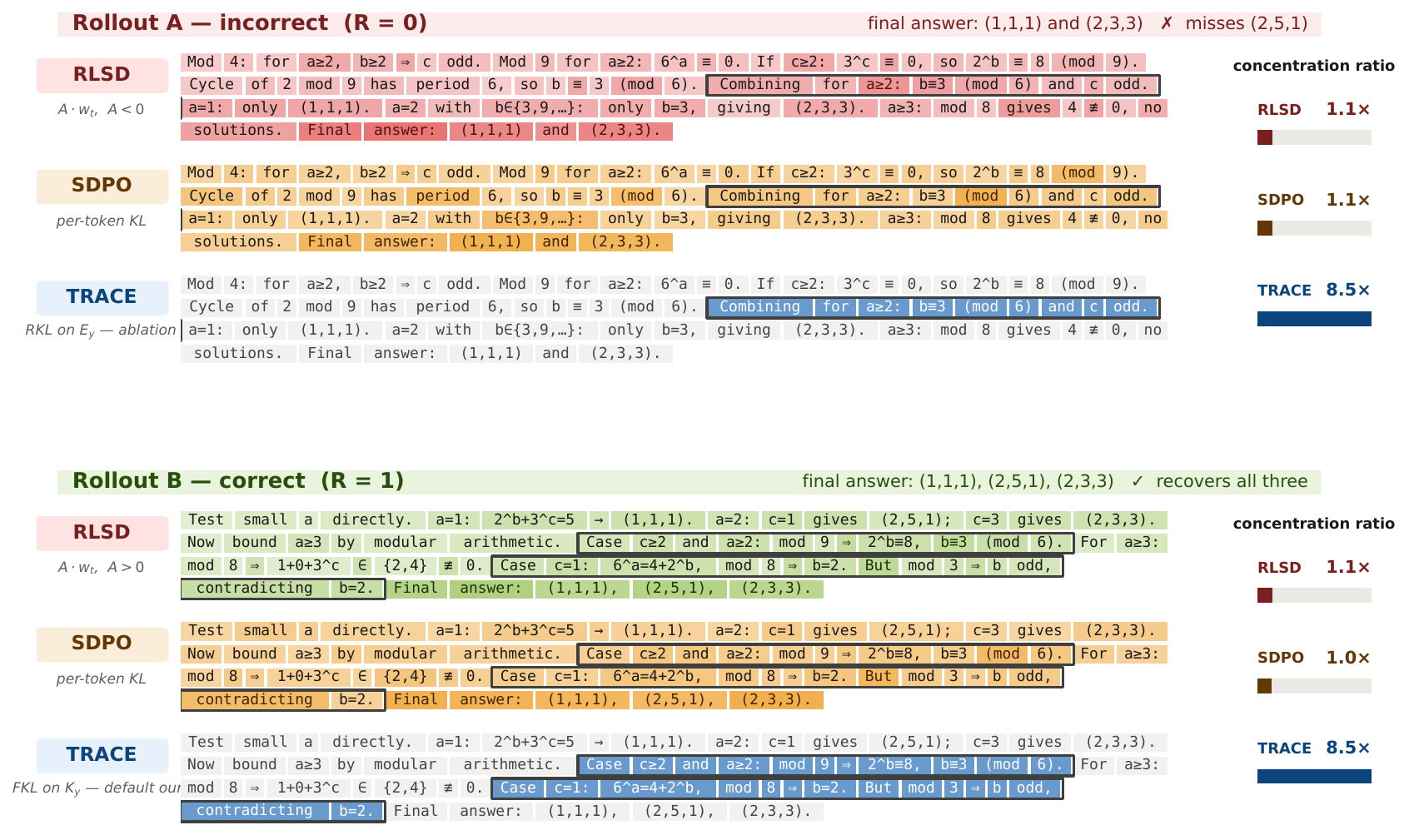}
\vspace{0.04in}

{\captionsetup{font=small,width=\linewidth}
\captionof{figure}{Token-level credit assignment on the Diophantine case study. Black boxes denote annotator spans overlaid for all methods; only \method uses them as loss masks. Credit is normalized update magnitude, and the concentration ratio is mean credit inside boxes divided by outside. Rollout~A shows the optional RKL-on-$E_y$ ablation; Rollout~B shows the default FKL-on-$K_y$ configuration.}
\label{fig:case-study}}
\end{center}
\clearpage

\subsection{Annotator Quality Dynamics}
\label{app:annot-dynamics}

\begin{figure}[h]
  \centering
  \includegraphics[width=0.72\linewidth]{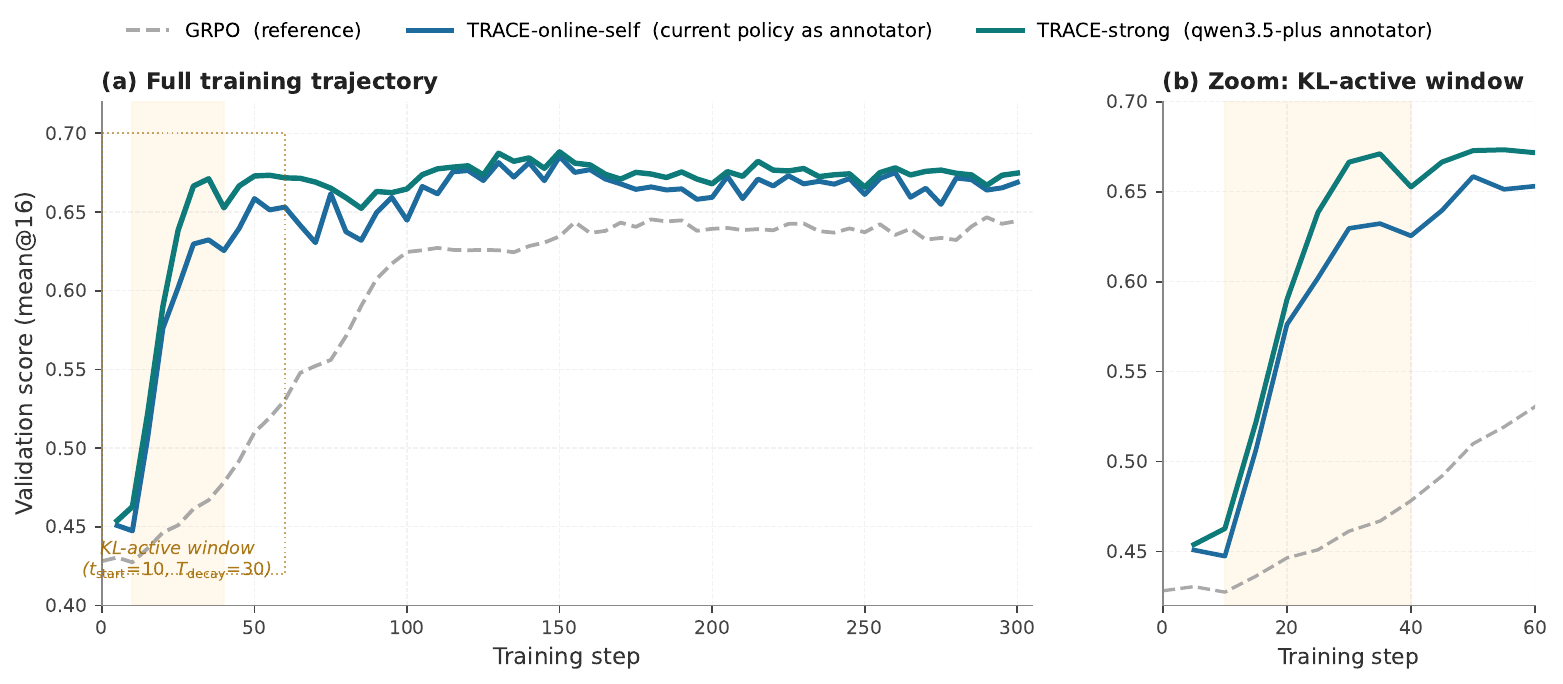}
  \caption{\textbf{Annotator ablation dynamics} (cf.\ Tab.~\ref{tab:a8-annotator}). Validation mean@16 on OpenThoughts held-out. Shaded band in (a) marks the KL-active window (step 10--40); (b) zooms in. Online-self lags strong-API only during this window and stabilizes ${\sim}1.5$\,pp below strong; both outperform GRPO throughout. EMA $\alpha = 0.85$.}
  \label{fig:annot-dynamics}
\end{figure}

\FloatBarrier

\subsection{Evaluation Protocol Sanity Checks}
\label{app:base-eval}

The Qwen3-8B base scores reported in the ``base'' row of Tab.~\ref{tab:main} are independently re-graded with an \emph{ensemble OR} grader (\texttt{verl/math\_dapo.py} $\vee$ HF \texttt{math-verify} $\vee$ AIME-int compare $\vee$ normalized-string compare); across all five eval cells the strict-vs-ensemble difference is $\le 0.03$\,pp, i.e.\ smaller than the bootstrap CI width on a problem set of this size. The headline strict-grader numbers are therefore not driven by grader choice. Tab.~\ref{tab:main-ci} reports cell-wise bootstrap 95\% confidence intervals over problems for the main held-out results. These intervals are intended for calibration of benchmark uncertainty; because AIME contains only 30 problems, we avoid treating individual AIME-column gaps as standalone significance claims.

\begin{table}[h]
\centering\scriptsize
\setlength{\tabcolsep}{3pt}
\caption{\textbf{Cell-wise bootstrap 95\% confidence intervals} for Tab.~\ref{tab:main}. Each cell reports mean accuracy with a bootstrap interval over problem IDs. AVG is the unweighted mean over the five benchmark means.}
\label{tab:main-ci}
\resizebox{\linewidth}{!}{%
\begin{tabular}{lccccc c}
\toprule
Method & MATH-500 & AIME 24 & AIME 25 & AMC 23 & GPQA-D & AVG \\
\midrule
Qwen3-8B base (ref.) &
$96.80\,[95.48,98.00]$ &
$76.25\,[62.50,87.92]$ &
$67.50\,[54.17,80.42]$ &
$95.94\,[92.50,98.75]$ &
$58.27\,[52.21,64.20]$ &
$78.95\,[75.04,82.86]$ \\
\midrule
GRPO &
$97.30\,[96.03,98.45]$ &
$77.08\,[62.71,89.38]$ &
$68.96\,[54.58,82.08]$ &
$96.56\,[92.19,99.38]$ &
$53.85\,[48.04,59.66]$ &
$78.75\,[74.67,82.83]$ \\
SDPO &
$95.13\,[93.65,96.48]$ &
$52.50\,[38.33,66.67]$ &
$41.25\,[26.04,56.67]$ &
$91.41\,[84.69,96.25]$ &
$39.90\,[34.85,44.95]$ &
$64.04\,[59.58,68.50]$ \\
SRPO &
$96.48\,[95.13,97.65]$ &
$69.17\,[54.58,82.08]$ &
$54.17\,[39.58,67.50]$ &
$93.12\,[87.50,97.50]$ &
$37.31\,[32.32,42.42]$ &
$70.05\,[65.87,74.23]$ \\
RLSD &
$96.68\,[95.35,97.90]$ &
$75.42\,[61.67,87.08]$ &
$68.54\,[55.21,81.46]$ &
$97.03\,[93.12,99.38]$ &
$57.26\,[51.58,62.82]$ &
$78.99\,[74.96,82.98]$ \\
\midrule
\method (RKL on $\mathcal{E}_y$) &
$97.48\,[96.18,98.55]$ &
$78.54\,[64.17,90.42]$ &
$70.42\,[56.25,83.33]$ &
$97.81\,[94.38,99.69]$ &
$56.44\,[50.06,62.69]$ &
$80.14\,[76.08,84.15]$ \\
\method (FKL on $\mathcal{K}_y$) &
$97.83\,[96.65,98.83]$ &
$80.21\,[66.67,92.50]$ &
$73.54\,[59.58,86.25]$ &
$97.66\,[94.06,99.69]$ &
$58.33\,[52.53,64.14]$ &
$81.51\,[77.57,85.45]$ \\
\bottomrule
\end{tabular}%
}
\end{table}

\subsection{SDPO/SRPO Re-Implementation Sweep}
\label{app:sdpo-sweep}

Tab.~\ref{tab:sdpo-sweep} lists the configurations we tried for the SDPO and SRPO baselines. Official training code is not publicly released, so the entries are our re-implementations of the published objectives. Each row gives the values explored and which value was selected for the headline rows of Tab.~\ref{tab:main}; the per-cell training curves are released with the code.

\paragraph{Transfer caveat.}
The original SDPO and SRPO gains are reported on scientific / code / chemistry benchmarks with different feedback channels; our setting is long-horizon math RLVR with Qwen3 Think rollouts. We therefore treat the SDPO and SRPO rows of Tab.~\ref{tab:main} as faithful re-implementations under our setting, not as claims about the original authors' exact systems. The gap to the original reports is plausibly due to the change in domain, feedback channel, base model, and long-horizon Think-mode generation; we do not isolate a single cause. The same all-token collapse on Qwen3-1.7B (Fig.~\ref{fig:val_reward}) further weakens a base-model-specific implementation-artifact explanation: the failure mode reproduces across two scales and ${>}12$ swept configurations.

\begin{table}[h]
\centering\small
\setlength{\tabcolsep}{4pt}
\caption{SDPO/SRPO re-implementation sweep on Qwen3-8B + OpenThoughts math. Selection is by OpenThoughts validation, never by the held-out benchmarks of Tab.~\ref{tab:main}. ``Best'' lists the configuration that maximized validation reward before the collapse onset documented in App.~\ref{app:three-symptom}.}
\label{tab:sdpo-sweep}
\begin{tabular}{lll}
\toprule
\textbf{Factor} & \textbf{Values tested} & \textbf{Best} \\
\midrule
Learning rate          & $\{10^{-6},\ 5{\times}10^{-6},\ 10^{-5}\}$ & $1{\times}10^{-5}$ \\
Batch size             & $\{32,\ 256\}$                    & $32$ \\
EMA rate               & $\{0,\ 0.05\}$                    & SDPO $0$ / SRPO $0.05$ \\
Distillation top-$K$   & $\{20,\ 100\}$                    & $100$ \\
Divergence             & $\{\mathrm{FKL},\ \mathrm{RKL},\ \mathrm{JSD}\}$ & SDPO JSD ($\alpha{=}0.5$) / SRPO FKL \\
Thinking mode          & symmetric / asymmetric            & asymmetric (both modes collapse) \\
Per-vocab KL clip      & $\{0,\ 0.05\}$                    & $0.05$ \\
\bottomrule
\end{tabular}
\end{table}

Across the sweep, two failure symptoms (length collapse, entropy rise; Fig.~\ref{fig:three-symptom}) appear within 150--200 steps; the third (validation collapse) is the headline observation of \S\ref{sec:exp:main}. Best-checkpoint numbers from the surviving early window are what enter Tab.~\ref{tab:main}. The gap to the original SDPO/SRPO reports is plausibly due to the change in domain (math vs scientific / code), feedback channel, base model (Qwen3-8B), and long-horizon Think-mode generation; we do not isolate a single cause.
\clearpage
\subsection{Three-Symptom Failure Pattern of All-Token Self-OPD Baselines}
\label{app:three-symptom}

\begin{figure}[h]
  \centering
  \includegraphics[width=\linewidth]{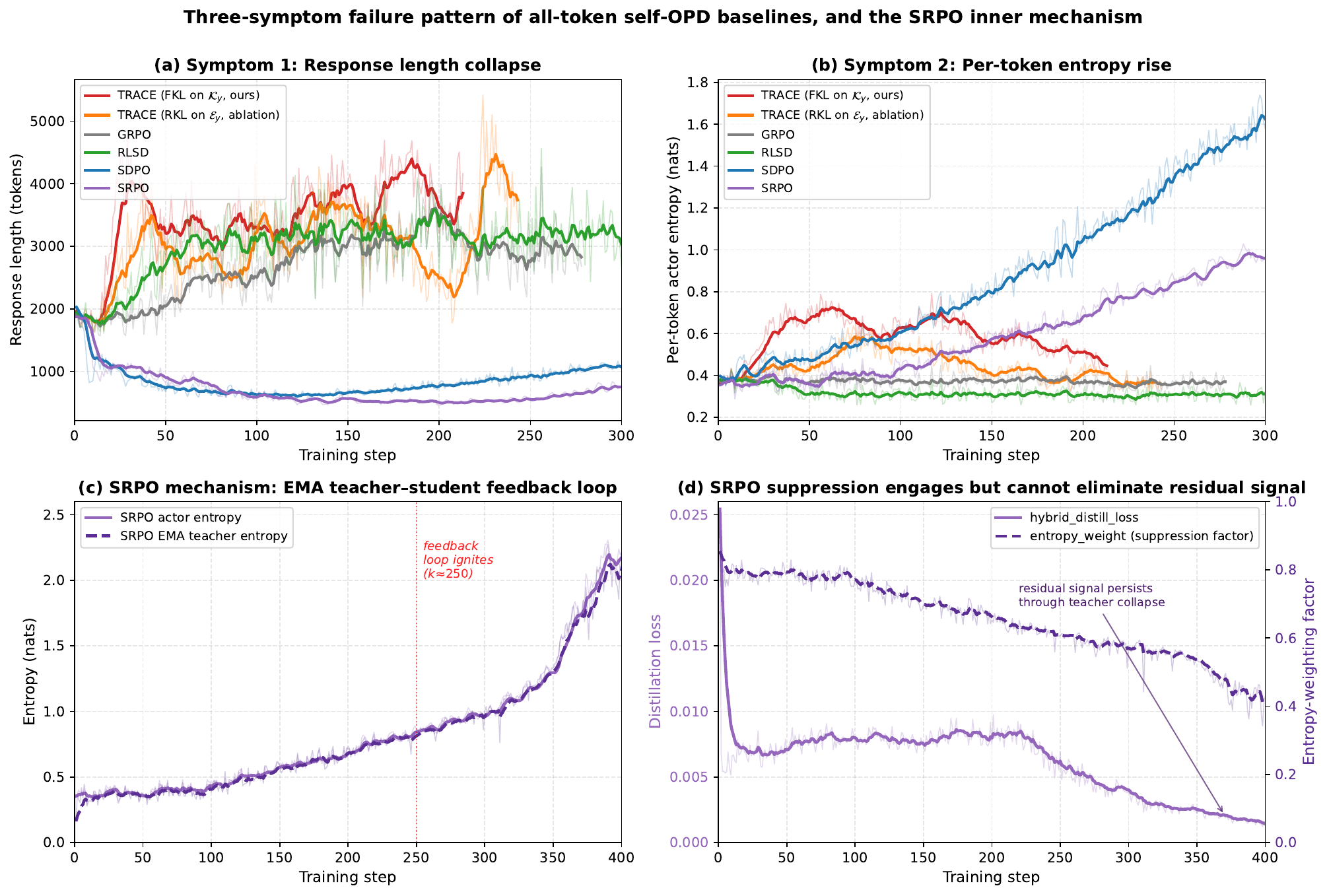}
  \caption{Three-symptom failure pattern of all-token self-OPD baselines, and the SRPO inner mechanism. \textbf{(a)} Response length over training. SDPO collapses from $2027 \to 1042$ tokens (min~528) and SRPO from $1873 \to 759$ (min~460), reaching ${\sim}25\%$ of their starting length; \method, RLSD, and GRPO maintain or grow response length. \textbf{(b)} Per-token actor entropy. SDPO climbs to 1.74\,nats ($4.4\times$ the GRPO baseline) and SRPO to 1.03\,nats; \method (FKL) stays at 0.43\,nats, comparable to GRPO/RLSD. The two symptoms together --- collapsed length with elevated per-token entropy on the surviving short responses --- match the description in~\citet{gu2024minillm} of unstabilized on-policy distillation. \textbf{(c)} SRPO actor entropy and EMA teacher entropy plotted in lock-step; teacher entropy tracks actor entropy throughout training (gap $<$ 0.1\,nat at every step), with both climbing rapidly past step~250. This is the EMA feedback loop predicted by~\citet{kim2026why}: student errors are absorbed into the next teacher snapshot, which then amplifies them. \textbf{(d)} SRPO's entropy-aware suppression weight does decrease from $0.85 \to 0.39$ as teacher entropy rises, but never approaches zero; the residual \texttt{hybrid\_distill\_loss} continues to be applied throughout the collapse phase, indicating that SRPO's stabilization mechanism slows but cannot eliminate the feedback amplification observed in (c). EMA smoothing $\alpha = 0.80$; raw values shown faintly. Step~250 marks the inflection point at which the feedback loop accelerates.}
  \label{fig:three-symptom}
\end{figure}

\FloatBarrier
\subsection{Asymmetric vs Symmetric Thinking}
\label{app:a9-asym}

We ablate the SDPO-style \emph{asymmetric thinking} configuration --- teacher in Think mode but student in NoThink mode during training --- against the default \emph{symmetric thinking} configuration of \method (both in Think mode). Eval protocol matches the main paper: think-mode decoding $T{=}0.6$, $p{=}0.95$, $k{=}20$, max\_new\_tokens$=38912$; \texttt{avg@8} on MATH-500 / GPQA-Diamond, \texttt{avg@16} on AIME / AMC. All trained methods at best-checkpoint. Under asymmetric training, \method (FKL on $\mathcal{K}_y$) underperforms GRPO by ${\sim}3$\,pp on think-mode math but retains a positive ${+}2.9$\,pp gap on GPQA-Diamond OOD --- a dissociation we interpret as the asymmetric configuration suppressing in-distribution thinking-mode surface-form imitation while preserving the OOD signal; symmetric thinking (Tab.~\ref{tab:main}) recovers in-distribution math without losing the OOD gain.

\begin{table}[!ht]
\centering\small
\setlength{\tabcolsep}{4pt}
\caption{\textbf{Asymmetric thinking ablation.} Training: teacher Think, student NoThink; eval: think-mode. GRPO row matches Tab.~\ref{tab:main} (the asymmetric/symmetric distinction does not apply to GRPO since it has no teacher).}
\label{tab:a9-asym}
\begin{tabular}{lccccc c}
\toprule
Method (asymmetric training) & MATH-500 & AIME 24 & AIME 25 & AMC 23 & GPQA-D & AVG \\
\midrule
Qwen3-8B base, Thinking ON (ref.) & 96.80 & 76.25 & 67.50 & 95.94 & 58.27 & 78.95 \\
\midrule
+ GRPO  & \textbf{97.30} & \textbf{77.08} & \textbf{68.96} & \textbf{96.56} & 53.85 & \textbf{78.75} \\
+ RLSD  & 95.85 & \underline{73.75} & \underline{65.42} & \underline{95.31} & \textbf{56.88} & \underline{77.44} \\
+ \method (FKL on $\mathcal{K}_y$)   & 94.88 & 70.83 & 62.50 & 94.06 & \underline{56.69} & 75.79 \\
+ \method (RKL on $\mathcal{E}_y$, ablation) & 93.95 & 68.33 & 60.42 & 93.12 & 56.44 & 74.45 \\
+ SRPO  & \underline{96.62} & 68.75 & 52.71 & 92.50 & 35.54 & 69.22 \\
+ SDPO  & 94.88 & 50.42 & 38.75 & 90.62 & 38.26 & 62.59 \\
\bottomrule
\end{tabular}
\end{table}

\FloatBarrier
\subsection{NoThink-Eval Robustness Under Asymmetric Training}
\label{app:a10-nothink}

We further evaluate the asymmetric-training checkpoints under a NoThink decoding budget (max\_new\_tokens$=8192$, Thinking off, \texttt{avg@8}); this matches the student's training mode and isolates content quality from chain-of-thought verbosity. \method (FKL on $\mathcal{K}_y$) attains the highest AVG and exceeds GRPO by $+2.22$\,pp; SDPO and SRPO collapse further once thinking-token padding is removed. The result rules out a padding-only explanation \emph{for the asymmetric variant}; the corresponding NoThink check on the main symmetric-Think checkpoints is not included in this submission.

\begin{table}[!ht]
\centering\small
\setlength{\tabcolsep}{4pt}
\caption{\textbf{NoThink-eval ablation under asymmetric training.} max\_new\_tokens$=8192$, Thinking OFF, \texttt{avg@8}. Best per column \textbf{bold}.}
\label{tab:a10-nothink}
\begin{tabular}{lccccc c}
\toprule
Method (NoThink eval) & MATH-500 & AIME 24 & AIME 25 & AMC 23 & GPQA-D & AVG \\
\midrule
Qwen3-8B base, Thinking OFF (ref.) & 84.20 & 27.08 & 19.17 & 68.12 & 48.11 & 49.34 \\
\midrule
+ GRPO  & 91.47 & \textbf{43.75} & 29.58 & 80.00 & \textbf{48.86} & 58.73 \\
+ RLSD  & 90.88 & \underline{43.33} & \underline{33.75} & \underline{82.50} & 45.22 & \underline{59.14} \\
+ SDPO  & 78.85 & 15.83 & 11.67 & 58.75 & 28.66 & 38.75 \\
+ SRPO  & 62.52 &  8.33 &  5.00 & 40.94 & 36.49 & 30.66 \\
+ \method (RKL on $\mathcal{E}_y$, ablation) & 86.88 & 26.67 & 25.83 & 80.00 & \underline{48.36} & 53.55 \\
\textbf{+ \method (FKL on $\mathcal{K}_y$)}  & \textbf{92.25} & 42.08 & \textbf{36.25} & \textbf{87.81} & 46.34 & \textbf{60.95} \\
\bottomrule
\end{tabular}
\end{table}

\FloatBarrier
\subsection{Negative Results}
\paragraph{Generic rubrics and free-form critique (negative).}
A pre-\method exploration phase tested query-level rubric generators (Rubric-ARM-style~\citep{xx2026rubricarm}) and free-form LLM critique as alternative teacher conditioning. Both underperformed span-localized routing in our setup while incurring substantially higher annotator cost (per-query rubric generation is $G\times$ more expensive than the per-rollout type-label approach). The negative finding motivated the per-rollout type-label design adopted in \method.

\paragraph{All-token forward KL plus decay (negative).}
We further tested whether the same FKL signal applied to \emph{all} response tokens, with the same decay schedule, would suffice. Under matched warm-up and decay, full-token FKL still degraded GPQA-Diamond OOD relative to the span-localized variant by step $100$, indicating that span localization (not just decay) is necessary.


\end{document}